\newtheorem{definition}{Definition}
\newtheorem{remark}{Remark}
\newtheorem{theorem}{Theorem}
\newtheorem{lemma}{Lemma}
\newtheorem{proposition}{Proposition}
\newtheorem{corollary}{Corollary}
\newtheorem{example}{Example}
\newcommand{\be}{\begin{equation}}
\newcommand{\ee}{\end{equation}}
\newcommand{\ben}{\begin{enumerate}}
\newcommand{\een}{\end{enumerate}}
\newcommand{\bea}{\begin{eqnarray}}
\newcommand{\eea}{\end{eqnarray}}
\newcommand{\bean}{\begin{eqnarray*}}
\newcommand{\eean}{\end{eqnarray*}}
\newcommand{\ex}{\mathbb{E}}
\newcommand{\pr}{\mathbb{P}}
\newcommand{\ps}{{\psi^*}^{-1}}
\newcommand{\Lt}{\Lambda_{X_t}(\lambda)}
\newcommand{\hy}{\mathcal{H}}
\newcommand{\pa}{\mathcal{P}}
\title{Chaining Mutual Information and Tightening Generalization Bounds}
\author{\vspace{3mm}
  Amir R. Asadi$^1$\thanks{Corresponding author: \texttt{aasadi@princeton.edu}} ~~~~ Emmanuel Abbe$^{1,2}$ ~~~~ Sergio Verd\'{u} \\
  $^1$Princeton University ~~~~ $^2$EPFL
}
\begin{document}

\maketitle

\begin{abstract}
Bounding the generalization error of learning algorithms has a long history, which yet falls short in explaining various generalization successes including those of deep learning. Two important difficulties are (i) exploiting the dependencies between the hypotheses, (ii) exploiting the dependence between the algorithm's input and output. Progress on the first point was made with the chaining method, originating from the work of Kolmogorov, and used in the VC-dimension bound. More recently, progress on the second point was made with the mutual information method by Russo and Zou '15. Yet, these two methods are currently disjoint. In this paper, we introduce a technique to combine the chaining and mutual information methods, to obtain a generalization bound that is both algorithm-dependent and that exploits the dependencies between the hypotheses. We provide an example in which our bound significantly outperforms both the chaining and the mutual information bounds. As a corollary, we tighten Dudley's inequality when the learning algorithm chooses its output from a small subset of hypotheses with high probability.
\end{abstract}

\section{Introduction}\label{Introduction}
\subsection{Motivation}
Understanding the generalization phenomenon in machine learning has been a central question for many years and revived in recent years with the success and mystery of deep learning: why do neural nets generalize well, although they operate in a classically overparametrized setting? In particular, classical generalization bounds do not explain this phenomenon (see e.g. \cite{Zhang}, \cite{belkin2018understand}). Even simpler instances of successful machine learning problems and algorithms are not explained satisfactorily with current generalization bounds, e.g. \cite{belkin2018understand}. This paper aims at deriving tighter generalization bounds for learning algorithms by combining ideas from information theory and from high dimensional probability.

Generalization bounds have evolved throughout the years, starting from the basic union bound over the hypothesis set, the refined union bound, Rademacher complexity, chaining and VC-dimension \cite{bousquet2004introduction}, \cite{SSS}; and algorithm-dependent bounds such as PAC-Bayesian bounds \cite{McAllester}, uniform stability \cite{bousquet2002stability}, compression bounds \cite{compression}, and recently, the mutual information bound \cite{Russo}. 

We highlight two pitfalls among the key limitations of current bounds:

\paragraph{A. Ignoring the dependencies between the hypotheses.} Consider the following example (which we refer to as Example I): an algorithm observes $G^2=(G_1,G_2)$, where $G_1$ and $G_2$ are two independent standard normal random variables; the hypothesis set $\hy=\{h_t: t\in T\}$ consists of functions $h_t(G^2)\triangleq\langle t, G^2 \rangle $, where $T\triangleq\{t\in \mathbb{R}^2: \Vert t \Vert_2=1\}$. Suppose the algorithm is designed to choose the hypothesis which achieves $\max_{t\in T}h_t(G^2)$. Since $h_t(G^2), t\in T$ are all zero mean random variables, the expected bias of the algorithm is $\ex\left[\max_{t\in T}h_t(G^2)\right]$. Moreover, since $\hy$ consists of an uncountable number of hypotheses,  the union bound (or equivalently the maximal inequality) over the hypothesis set  
gives a vacuous bound. However, the fact is that we are not dealing with infinite number of \emph{independent} random variables: the random variables $h_t(G^2)$ and $h_s(G^2)$ are actually quite dependent on each other when $t$ and $s$ are close.

To exploit the dependencies, the powerful technique of \emph{chaining} has been developed in high dimensional probability in order to obtain uniform bounds on random processes, and has proven successful in a variety of problems including statistical learning. More specifically, chaining is the method for proving the tightest generalization bound using VC-dimension \cite{Ramon}, \cite{Vershynin}.  Originating from the work of Kolmogorov in 1934 (see \cite[p. 149]{Ramon}) and later developed by Dudley, Fernique, Talagrand and many others \cite{TalagrandBook}, the basic idea of chaining is to first describe the dependencies between the hypotheses by a metric $d$ on the set $T$, then to discretize $T$ and to approximate the maximal value ($\max_{t\in T}h_t(G^2)$) by approximating the maxima over successively refined finite discretizations, using union bounds at each step, and by introducing the notion of \emph{$\epsilon$-nets} and \emph{covering numbers} \cite{Lugosi}. For instance, with this method, one can prove the finite upper bound $\ex\left[\max_{t\in T}h_t(G^2)\right]\leq 19.0353$.  
 Even for many examples of finite hypothesis sets, chaining is known to give far tighter bounds than the union bound \cite{Ramon}. 
 Next we state a fundamental result which is based on the chaining method. For a metric space $(T,d)$, let $N(T,d,\epsilon)$ denote the covering number of $(T,d)$ at scale $\epsilon$. For the definitions of $\epsilon$-net and covering number, see Definition \ref{epsilonnet} in Section \ref{LipschitzProcSection} of the supplementary material, and for the definition of seperable subgaussian processes see Definitions \ref{SubgaussianProcessDefinition} and \ref{Separable process}.
 
\begin{theorem}[Dudley]\label{DudleyTheorem} \cite{Dudley}.
Assume that $\{X_t\}_{t\in T}$ is a separable subgaussian process on the bounded metric space $(T,d)$. Then
\begin{equation}\label{DudleyIneq}
\ex\left[\sup_{t\in T}X_t\right]\leq 6\sum_{k\in \mathbb{Z}}2^{-k}\sqrt{\log N(T,d,2^{-k})}.
\end{equation}
\end{theorem}
Note that PAC-Bayesian bounds, compression bounds and bounds based on uniform stability also do not exploit the dependencies between the hypotheses as they are not based on any metric on the hypothesis set.

\paragraph{B. Ignoring the dependence between the algorithm input (data) and output.} Generalization bounds based on Rademacher complexity\footnotemark and VC-dimension only depend on the hypothesis set and not on the algorithm, effectively rendering them too pessimistic for practical algorithms. Recent experimental findings in \cite{Zhang} have shown that in the over-parameterized regime of deep neural nets, such complexity measures give vacuous bounds for the generalization error. A possible explanation for that vacuousness is as follows: if $\mathcal{H}=\{h_t: t\in T\}$ denotes the hypothesis set and for every $t\in T$, $X_t$ denotes the generalization error of hypothesis $h_t$ and $W$ denotes the index of the chosen hypothesis by the algorithm, then to upper bound the expected generalization error $\ex[X_W]$, one uses 
\begin{equation}\label{ToyExampleInequality0}
\ex[X_W]\leq \ex\left[\sup_{t\in T}X_t\right],
\end{equation}
and aims at upper bounding $\ex\left[\sup_{t\in T}X_t\right]$ with these bounds, hence giving a \emph{uniform bound} over the generalization errors of the \emph{entire hypothesis set}. However, all we need to control is the generalization error of the \emph{specific hypothesis} $W$ selected by the algorithm. That expected generalization error of $W$ can be much smaller than the right side of (\ref{ToyExampleInequality0}) (see also \cite{Kawaguchi}). In other words, such bounds are not taking into account the input-output relation of the algorithm, and uniform bounding seems to be too stringent for these applications.  Consider the following example (which we refer to as Example II): let $X_1,X_2,...,X_n$ be standard normal random variables and assume that the algorithm output is index $W$. Therefore the expected bias of the algorithm is $\ex[X_W]$ and the goal is to upper bound it. By the maximal inequality (or equivalently the union bound), we have 
\footnotetext{Here we are referring to the Rademacher average of the entire hypothesis set. There exist other notions of Rademacher averages which are used in algorithm-dependent bounds, such as in local Rademacher complexities \cite{bartlett2005local}.}
\begin{equation}\label{ToyExampleInequality1}
\ex\left[\sup_{1\leq i\leq n}X_i\right]\leq \sqrt{2\log n},
\end{equation}
where (\ref{ToyExampleInequality1}) is asymptotically tight if $X_i, i=1,2,...,n$ are independent (see \cite[Chapter 2]{Lugosi}). But what if the algorithm is always more likely to choose $W$ among a small subset of $\{1,2,...,n\}$? Then  $\ex[X_W]$ could be much smaller than
the right side of  \eqref{ToyExampleInequality1}, as the chances of having an outlier value is smaller. Or, if the choice of $W$ is
not dependent on the data, then $\ex[X_W]=\ex[\ex[X_W|W]]=0$. Interestingly, to explain this phenomenon and to obtain tighter upper bounds on $\ex[X_W]$ an important information theoretic measure appears: the \emph{mutual information}. This was originally proposed  in the key paper of Russo and Zou \cite{Russo} and then generalized in \cite{Jiao}, \cite{Jiao2},  and in \cite{Raginsky} for infinite number of hypotheses:

\begin{theorem}\label{Xu Raginsky Theorem}\cite{Russo}\cite{Raginsky}
Let $\{X_t\}_{t\in T}$ be a random process and $T$ an arbitrary set. Assume that $X_t$ is $\sigma^2$-subgaussian and $\ex[X_t]=0$ for every $t\in T$, and let $W$ be a random variable taking values on $T$. Then 
\begin{equation}\label{Xu Raginsky Inequality}
|\ex[X_W]|\leq \sqrt{2\sigma^2 I(W;\{X_t\}_{t\in T})}.
\end{equation} 
\end{theorem}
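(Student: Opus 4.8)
The plan is to prove the bound by a change-of-measure argument built on the Donsker--Varadhan variational representation of relative entropy. Write $X$ for the whole process $\{X_t\}_{t\in T}$, and let $P_{W,X}$ denote the joint law of $(W,X)$, with marginals $P_W$ and $P_X$. The mutual information is by definition the relative entropy $I(W;X)=D(P_{W,X}\Vert P_W\otimes P_X)$. The key structural observation is that under the \emph{product} measure $P_W\otimes P_X$ the index $W$ and the process are independent, so that $\ex_{P_W\otimes P_X}[X_W]=0$ because conditioning on $W=t$ leaves $\ex_{P_X}[X_t]=0$ for each $t$, each $X_t$ being centered. Thus the entire content of the bound is that $\ex[X_W]$ cannot be large unless $P_{W,X}$ is far from its decoupled version, and relative entropy is precisely the right way to quantify this gap.

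First I would recall the Donsker--Varadhan inequality: for any two measures $P\ll Q$ and any measurable $f$ with $\ex_Q[e^f]<\infty$,
\begin{equation*}
\ex_P[f]\leq D(P\Vert Q)+\log\ex_Q\big[e^{f}\big].
\end{equation*}
I would apply this with $P=P_{W,X}$, $Q=P_W\otimes P_X$, and the test function $f=\lambda X_W$ for a free parameter $\lambda\in\mathbb{R}$. The left-hand side becomes $\lambda\,\ex[X_W]$. For the log-moment term, I would evaluate the expectation under the product measure by conditioning on $W$: for fixed $W=t$ the $\sigma^2$-subgaussian assumption gives $\ex_{P_X}[e^{\lambda X_t}]\leq e^{\lambda^2\sigma^2/2}$, and averaging over $W$ preserves this bound, so $\log\ex_{P_W\otimes P_X}[e^{\lambda X_W}]\leq \lambda^2\sigma^2/2$. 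Combining, for every $\lambda\in\mathbb{R}$,
\begin{equation*}
\lambda\,\ex[X_W]\leq I(W;X)+\frac{\lambda^2\sigma^2}{2}.
\end{equation*}

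To conclude I would read this as a quadratic inequality in $\lambda$: the map $\tfrac{\sigma^2}{2}\lambda^2-\ex[X_W]\,\lambda+I(W;X)$ is nonnegative for all real $\lambda$, so its discriminant is nonpositive, giving $\ex[X_W]^2\leq 2\sigma^2 I(W;X)$ at once and hence the two-sided bound on $|\ex[X_W]|$; equivalently one may simply optimize the right-hand side $I(W;X)/\lambda+\lambda\sigma^2/2$ over $\lambda>0$ for the upper bound and over $\lambda<0$ for the lower bound. The main obstacle I anticipate is not the analytic core but the measure-theoretic bookkeeping when $T$ is an arbitrary, possibly uncountable, index set: one must ensure that $X_W$ is a well-defined measurable function of the pair $(W,X)$ and that the conditioning on $W$ used in the moment bound is justified, which is where a separability or regularity hypothesis on the process quietly enters. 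Once $X_W$ is legitimately defined and $P_{W,X}\ll P_W\otimes P_X$ (so that $I(W;X)$ is finite and Donsker--Varadhan applies, the bound being vacuous otherwise), the remaining steps are routine.
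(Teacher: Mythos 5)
Your proof is correct and follows essentially the same route the paper (implicitly, via its citations to Russo--Zou and Xu--Raginsky) relies on: the Donsker--Varadhan variational representation of relative entropy applied with the test function $\lambda X_W$ and the product measure $P_W\otimes P_X$ as reference, followed by optimization over $\lambda$; this is exactly the mechanism the authors invoke in the appendix (Theorem \ref{Donsker-Varadhan} and the remarks around Theorems \ref{FiniteExpectationMI} and \ref{FiniteTailMI}). The only cosmetic difference is that the paper obtains the two-sided bound by applying the one-sided result to $\{-X_t\}_{t\in T}$ (Proposition \ref{AbsoluteValueFiniteCorollary}(a)), whereas you get both signs at once from the discriminant of the quadratic in $\lambda$ --- an equivalent step.
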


In Example II, instead of using (\ref{ToyExampleInequality0}) and (\ref{ToyExampleInequality1}), one can have the tighter upper bound
\begin{equation}\label{Toy Example Inequality 2}
\ex[X_W]\leq \sqrt{2I(W;X_1,...,X_n)}.
\end{equation}
For example, if the algorithm chooses $W$ among $\{1,2,...,\lceil\log n \rceil\}$ with probability $1-o(1)$, then (\ref{Toy Example Inequality 2}) implies 
\begin{equation}
\ex[X_W]\leq \sqrt{2\left((1-o(1)\right)\log (\log n)+o(1)\log (n-\log n)+1}\ll \sqrt{2\log n}.
\end{equation}
However, this method does not give a finite bound for Example I, since 
\begin{equation}
I\left(\mathrm{argmax}_{t\in T}h_t(G^2);\{h_t(G^2)\}_{t\in T}\right)=\infty.
\end{equation}

Similarly, as discussed in \cite{PJL}, the mutual information bound for perturbed SGD or any iterative algorithm which adds degenerate noise in each iteration blows up, and information-theoretic strategies for analyzing generalization error of such algorithms have not been reported.

\subsection{This paper} 
By combining the ideas of the chaining method and the mutual information method, in this paper we obtain a \emph{chained mutual information} bound on the expected generalization error which takes into account the dependencies between the hypotheses as well as the dependence between output and input of the algorithm. When applied to the two aforementioned simple examples (Examples I and II), our bound yields the better bound between the classical chaining and classical mutual information bounds. More importantly,  we provide examples for which our bound outperforms both of the previous bounds significantly: in Example \ref{Running Example} we provide a family of cases where the chaining method gives a relatively large constant, the mutual information bound blows up, but our bound tends towards zero. We also discuss how our new  bound gives a possible direction to explain the phenomenon described in \cite{PJL} (see Remark \ref{PJL Remark}), and to exploit regularization properties of some algorithms (see Section \ref{Small set section}).

\subsection{Further related literature} 
In \cite{Moran}, the mutual information between the input and the output of binary classification learning algorithms is used to obtain high probability generalization bounds.

PAC-Bayesian bounds are another type of algorithm-dependent bounds which are concerned with finding high probability generalization bounds for randomized classifiers \cite{McAllester}. These bounds define a hierarchy over the hypothesis set by using a prior distribution on that set \cite{SSS}.  As discussed in \cite{Moran}, there is a connection and similarity between PAC-Bayesian bounds and the mutual information bound, both using the variational representation of relative entropy in their proofs. In \cite{audibert2004pac} and \cite{audibert2007combining}, the authors combine the ideas of PAC-Bayesian bounds with generic chaining and create high probability bounds for randomized classifiers. Their use of an auxiliary sample set and the notion of average distance between partitions makes their bounds conceptually different from our work. However, their bounds have the advantage to exploit the variance of the hypotheses and to give high probability results. 

 In the probability theory literature, Fernique \cite{Fernique} gives upper and lower bounds on the expected bias of an algorithm (or a selection rule) which chooses its output from a Gaussian process, by using a chaining argument while taking into account the marginal distribution of the algorithm output. We further utilize the dependence between the algorithm input and output and the stochasticity of the algorithm, and we give results for more general processes. However, we only obtain upper bounds in this paper. 

\subsection{Notation}
In the framework of supervised statistical learning, $\mathcal{X}$ is the instances domain, $\mathcal{Y}$ is the labels domain and $\mathsf{Z}=\mathcal{X}\times \mathcal{Y}$ denotes the examples domain. Furthermore, $\mathcal{H}=\{h_w : w\in \mathcal{W}\}$ is the hypothesis set where the hypotheses are indexed by an index set $\mathcal{W}$, and there is a nonnegative loss function $\ell:\mathcal{H}\times \mathsf{Z}\to \mathbb{R}^+$. A learning algorithm receives the training set $S=(Z_1,Z_2,...,Z_n)$ of $n$ examples with i.i.d. random elements drawn from $\mathsf{Z}$ with distribution $\mu$. Then it picks an element $h_W\in\mathcal{H}$ as the output hypothesis according to a random transformation $P_{W|S}$ (thus, we are allowing randomized algorithms). For any $w\in\mathcal{W}$, let 
\begin{equation}
L_{\mu}(w)\triangleq \ex[\ell(h_w,Z)], \hspace{4mm} Z\sim \mu 
\end{equation}
denote the statistical (or population) risk of hypothesis $h_w$. For a given training set $S$, the empirical risk of hypothesis $h_w$ is defined as 
\begin{equation}
L_{S}(w)\triangleq \frac{1}{n}\sum_{i=1}^n \ell(h_w,Z_i),
\end{equation}
and the generalization error of hypothesis $h_w$ (dependent on the training set) is defined as
\begin{equation}
\mathrm{gen}(w)\triangleq L_{\mu}(w)-L_S(w).
\end{equation}
Averaging with respect to the joint distribution $P_{S,W}=\mu^{\otimes n}P_{W|S}$, we denote the expected generalization error and the expected absolute value of generalization error by
\begin{equation}
\mathrm{gen}(\mu, P_{W|S})\triangleq \ex [L_{\mu}(W)-L_S(W)],
\end{equation}
and
\begin{equation}
\mathrm{gen}^+(\mu, P_{W|S})\triangleq \ex [|L_{\mu}(W)-L_S(W)|],
\end{equation}
respectively. Our purpose is to  find upper bounds on $\mathrm{gen}(\mu, P_{W|S})$ and $\mathrm{gen}^+(\mu, P_{W|S})$. 

Let $X_{\mathcal{N}}\triangleq\{X_i: i\in\mathcal{N}\}$ denote a random process indexed by the elements of the set $\mathcal{N}$. Let $\mathbf{0}$ denote the identically zero function. In this paper, all logarithms are in natural base and all information theoretic measures are in nats. $H(X)$  denotes the Shannon entropy of a discrete random variable $X$,  and $h(Y)$ denotes the differential entropy of an absolutely continuous random variable $Y$.  
\section{Main results}\label{Main results}
Assume that $\{X_t\}_{t\in T}$ is a random process with index set $T$. In the chaining method, we impose a metric $d$ on $T$ which describes the dependencies between the random variables. The widely used \emph{subgaussian processes} capture this notion and they arise in many applications: 
\begin{definition} [Subgaussian process] \label{SubgaussianProcessDefinition} The random process $\{X_t\}_{t\in T}$ on the metric space $(T,d)$ is called \emph{subgaussian} if $\ex[X_t]=0$ for all $t\in T$ and 
\begin{equation}
\ex\left[e^{\lambda(X_t-X_s)}\right]\leq e^{\frac12  \lambda^2 d^2(t,s)} \textrm{ ~  for all ~  } t,s\in T,  \lambda\geq 0.
\end{equation} 
\end{definition}
For example, based on the Azuma--Hoeffding inequality, $\{\mathrm{gen}(w)\}_{w\in \mathcal{W}}$ is a subgaussian process with the metric 
\begin{equation}
d(\mathrm{gen}(w), \mathrm{gen}(v))\triangleq\frac{\Vert \ell(h_w,\cdot)-\ell(h_v,\cdot)\Vert_{\infty}}{\sqrt{n}},
\end{equation}  
regardless of the choice of distribution $\mu$ on $\mathsf{Z}$. 

The following is a technical assumption which holds in almost all cases of interest:
\begin{definition} [Separable process] \label{Separable process}
The random process $\{X_t\}_{t\in T}$ is called \emph{separable} if there is a countable set $T_0\subseteq T$ such that $X_t\in \lim_{\substack{s\rightarrow t \\ s\in T_0}} X_s~$ for all $~t\in T~$  a.s.,
where $x\in \lim_{\substack{s\rightarrow t \\ s\in T_0}} x_s$ means that there is a sequence $(s_n)$ in $T_0$ such that $s_n\rightarrow t$ and $x_{s_n}\rightarrow x$.
\end{definition}
For example, if $t\to X_t$ is continuous a.s., then $X_t$ is a separable process \cite{Ramon}. 

Our main results rely on the notion of increasing sequence of $\epsilon$-partitions of the metric space $(T,d)$:

\begin{definition}[Increasing sequence of $\epsilon$-partitions] We call a partition $\pa=\{A_1, A_2,...,A_m\}$ of the set $T$ an \emph{$\epsilon$-partition} of the metric space $(T,d)$ if for all $i=1,2,...,m$, $A_i$ can be contained within a ball of radius $\epsilon$. A sequence of partitions $\{\pa_k\}_{k=m}^{\infty}$ of a set $T$ is called an \emph{increasing sequence} if for all $k\geq m$ and each $A\in\pa_{k+1}$, there exists $B\in \pa_k$ such that $A\subseteq B$. For any such sequence and any $t\in T$, let $[t]_k$ denote the unique set $A\in \pa_k$ such that $t\in A$.
\end{definition}

Assume now that $(T,d)$ is a bounded metric space, and let $k_1(T)$ be an integer such that $2^{-(k_1(T)-1)}\geq \mathrm{diam}(T)$. We have the following upper bounds on $\mathrm{gen}(\mu, P_{W|S})$ and $\mathrm{gen^+}(\mu, P_{W|S})$  based on the mutual information between the training set $S$ and the discretized output of the learning algorithm, where each of these mutual information terms is multiplied by an exponentially decreasing weight $2^{-k}$, in which the exponent measures how finely the output $W$ of the learning algorithm is discretized: 

\begin{theorem}\label{Generalization with chaining MI} Assume that $\{\mathrm{gen}(w)\}_{w\in \mathcal{W}}$ is a separable subgaussian process on the bounded metric space $(\mathcal{W},d)$. Let $\{\mathcal{P}_k\}_{k=k_1(\mathcal{W})}^{\infty}$ be an increasing sequence of partitions of $\mathcal{W}$, where for each $k\geq k_1(\mathcal{W})$, $\mathcal{P}_k$ is a $2^{-k}$-partition of $(\mathcal{W},d)$. 
\begin{enumerate}[(a)]
\item 
\begin{align}
&\mathrm{gen}(\mu, P_{W|S})\leq 3\sqrt{2}\sum_{k=k_1(\mathcal{W})}^{\infty}2^{-k}\sqrt{I([W]_k;S)},
\end{align}
\item If $\mathbf{0}\in \{\ell(h_w,\cdot): w\in\mathcal{W}\}$, then
\begin{align}
&\mathrm{gen^+}(\mu, P_{W|S})\leq 3\sqrt{2}\sum_{k=k_1(\mathcal{W})}^{\infty}2^{-k}\sqrt{I([W]_k;S)+\log 2}.
\end{align}
\end{enumerate}
\end{theorem}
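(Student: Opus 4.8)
The plan is to merge Dudley's chaining telescope with the Russo--Zou/Xu--Raginsky inequality (Theorem \ref{Xu Raginsky Theorem}), replacing the union bound that classical chaining uses at each scale by a mutual-information bound. For each $k\geq k_1(\mathcal{W})$ and each cell $A\in\mathcal{P}_k$, I would fix a representative $\pi_k(A)$, taken as the center of a radius-$2^{-k}$ ball containing $A$, so that $d(w,\pi_k(A))\leq 2^{-k}$ for every $w\in A$, and write $\pi_k(W)\eqdef\pi_k([W]_k)$. I would also fix a base point, namely an arbitrary $t_0\in\mathcal{W}$ for part (a) and the zero-loss index $w_0$ (which exists by the hypothesis $\mathbf{0}\in\{\ell(h_w,\cdot)\}$) for part (b), setting $\pi_{k_1-1}(W)\eqdef t_0$. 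By separability, and since the cell diameters vanish, $\mathrm{gen}(\pi_k(W))\to\mathrm{gen}(W)$ a.s., which yields the telescoping identity $\mathrm{gen}(W)=\mathrm{gen}(\pi_{k_1-1}(W))+\sum_{k\geq k_1}\Delta_k(W)$ with increments $\Delta_k(W)\eqdef\mathrm{gen}(\pi_k(W))-\mathrm{gen}(\pi_{k-1}(W))$.

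The crucial observation is that, because the partitions are increasing, $[W]_k$ determines $[W]_{k-1}$, hence $\Delta_k(W)=Y_{[W]_k}$ where $Y_A\eqdef \mathrm{gen}(\pi_k(A))-\mathrm{gen}(\pi_{k-1}(A))$ is a cell-indexed process. Each $Y_A$ is centered and, by the subgaussian hypothesis, subgaussian with variance proxy $d^2(\pi_k(A),\pi_{k-1}(A))$; the triangle inequality gives $d(\pi_k(A),\pi_{k-1}(A))\leq 2^{-k}+2^{-(k-1)}=3\cdot 2^{-k}$ (and likewise at $k=k_1$ using $\mathrm{diam}(\mathcal{W})\leq 2^{-(k_1-1)}$), so the proxy is at most $9\cdot 4^{-k}$. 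Applying Theorem \ref{Xu Raginsky Theorem} to $\{Y_A\}_{A\in\mathcal{P}_k}$ with selector $[W]_k$ yields $|\ex[\Delta_k(W)]|\leq\sqrt{2\cdot 9\cdot 4^{-k}\,I([W]_k;\{Y_A\})}$. Since $\{Y_A\}$ is a deterministic function of $S$ (as $\mathrm{gen}(w)=L_\mu(w)-L_S(w)$), the Markov chain $[W]_k - S - \{Y_A\}$ and the data-processing inequality give $I([W]_k;\{Y_A\})\leq I([W]_k;S)$, so $|\ex[\Delta_k(W)]|\leq 3\sqrt2\,2^{-k}\sqrt{I([W]_k;S)}$. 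Taking expectations in the telescope, using $\ex[\mathrm{gen}(t_0)]=0$, and summing the increment bounds proves part (a).

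For part (b) I would run the telescope from the base point $w_0$, so that $\mathrm{gen}(w_0)\equiv 0$ and $|\mathrm{gen}(W)|=\bigl|\sum_{k\geq k_1}\Delta_k(W)\bigr|\leq\sum_{k\geq k_1}|\Delta_k(W)|$; it then suffices to bound each $\ex|\Delta_k(W)|=\ex|Y_{[W]_k}|$. To absorb the absolute value I would double the index set, considering $Z_{A,\epsilon}\eqdef\epsilon Y_A$ for $\epsilon\in\{+1,-1\}$ (still centered and $9\cdot4^{-k}$-subgaussian) and the selector $([W]_k,\epsilon^*)$ with $\epsilon^*\eqdef\mathrm{sign}(Y_{[W]_k})$, so that $Z_{([W]_k,\epsilon^*)}=|Y_{[W]_k}|$. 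Theorem \ref{Xu Raginsky Theorem} then gives $\ex|Y_{[W]_k}|\leq\sqrt{2\cdot9\cdot4^{-k}\,I(([W]_k,\epsilon^*);S)}$, and the chain rule together with $H(\epsilon^*)\leq\log 2$ yields $I(([W]_k,\epsilon^*);S)\leq I([W]_k;S)+\log 2$, giving the claimed bound after summation.

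The main obstacle is the reduction in the second paragraph: classical chaining controls each scale by a maximal inequality over the (data-independent) collection of cells, whereas here I must control the single random increment $\Delta_k(W)$ in a way that sees the algorithm. The enabling fact is that $\Delta_k(W)$ depends on $(W,S)$ only through the discretized index $[W]_k$, which is precisely what lets the mutual information $I([W]_k;S)$---rather than a covering number---appear. Beyond this, the remaining care is routine but genuine: justifying the a.s.\ telescoping and the interchange of expectation and infinite sum via separability and dominated convergence, and verifying that the uniform variance proxy $9\cdot 4^{-k}$ is legitimate across all cells at scale $k$, so that Theorem \ref{Xu Raginsky Theorem} applies with a single $\sigma^2$.
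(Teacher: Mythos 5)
Your proposal is correct and follows essentially the same route as the paper: the chaining telescope with cell representatives, the observation that increasing partitions collapse the increment's index to $[W]_k$, Theorem \ref{Xu Raginsky Theorem} with variance proxy $9\cdot 4^{-k}$ at each scale, data processing to pass from the process to $S$, separability to kill the remainder, and the sign-doubling trick contributing the $\log 2$ in part (b). The paper packages these steps as Theorem \ref{ChainingIneqMITheorem} and Proposition \ref{AbsoluteValueFiniteCorollary}(b), but the argument is the same.
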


 \begin{remark}\normalfont
Based on the general definition of mutual information with partitions (\cite[p. 252]{Cover}), we have $I(W;S)=\sup_k I([W]_k;S)$ therefore $I([W]_k;S)\to I(W;S)$ as $k\to \infty$.
\end{remark}
Theorem \ref{Generalization with chaining MI} is stated in the context of statistical learning. The
more general counterpart in the context of random processes  is:
\begin{theorem}\label{Chaining MI Random Process} Assume that $\{X_t\}_{t\in T}$ is a separable subgaussian process on the bounded metric space $(T,d)$. Let $\{\mathcal{P}_k\}_{k=k_1(T)}^{\infty}$ be an increasing sequence of partitions of $T$, where for each $k\geq k_1(T)$, $\mathcal{P}_k$ is a $2^{-k}$-partition of $(T,d)$. 
\begin{enumerate}[(a)]
\item
\begin{align}
\ex [X_W]\leq 3\sqrt{2}\sum_{k=k_1(T)}^{\infty}2^{-k}\sqrt{I([W]_k;X_T)}.
\end{align}
\item For any arbitrary $t_0\in T$,
\begin{equation}
\ex[|X_W-X_{t_0}|]\leq 3\sqrt{2}\sum_{k=k_1(T)}^{\infty}2^{-k}\sqrt{I([W]_k;X_T)+\log 2}.
\end{equation}
\end{enumerate}
\end{theorem}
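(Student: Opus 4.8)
The plan is to run Dudley-style chaining but to estimate each chaining increment with the mutual-information bound of Theorem \ref{Xu Raginsky Theorem} rather than with a union bound. Fix a point $t_0\in T$ and, for each level $k\ge k_1(T)$ and each cell $A\in\mathcal{P}_k$, fix a representative $\pi_k(A)$ with $A\subseteq \bar B(\pi_k(A),2^{-k})$; set $\pi_k(t)\eqdef\pi_k([t]_k)$ and adopt the convention $\pi_{k_1(T)-1}(t)\equiv t_0$ (legitimate because $\mathrm{diam}(T)\le 2^{-(k_1(T)-1)}$, so the whole space is a single coarsest cell). Since the partitions are increasing, both $\pi_k(t)$ and $\pi_{k-1}(t)$ are determined by $[t]_k$, and the $2^{-k}$-radius property gives $d(t,\pi_k(t))\le 2^{-k}$, whence $d(\pi_k(t),\pi_{k-1}(t))\le 2^{-k}+2^{-(k-1)}=3\cdot 2^{-k}$ by the triangle inequality — this is the origin of the constant $3\sqrt2$. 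Telescoping along the chain and passing to the limit yields $X_W-X_{t_0}=\sum_{k\ge k_1(T)}\big(X_{\pi_k(W)}-X_{\pi_{k-1}(W)}\big)$ almost surely.

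For part (a) I would take expectations term by term. Since $\ex[X_{t_0}]=0$, it suffices to bound each $\ex[X_{\pi_k(W)}-X_{\pi_{k-1}(W)}]$. The crucial observation is that for fixed $k$ this increment is selected by the \emph{discrete} random variable $U_k\eqdef[W]_k$ from the family $\{D_A\}_{A\in\mathcal{P}_k}$, where $D_A\eqdef X_{\pi_k(A)}-X_{\pi_{k-1}(A)}$. Each $D_A$ has zero mean and, by subgaussianity, is $d^2(\pi_k(A),\pi_{k-1}(A))\le(3\cdot2^{-k})^2$-subgaussian. Applying Theorem \ref{Xu Raginsky Theorem} with index set $\mathcal{P}_k$, selector $U_k$, and process $\{D_A\}$ gives $\ex[D_{U_k}]\le 3\sqrt2\,2^{-k}\sqrt{I(U_k;\{D_A\}_{A\in\mathcal{P}_k})}$. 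As $\{D_A\}$ is a deterministic function of $X_T$ and $U_k=[W]_k$, the data-processing inequality yields $I(U_k;\{D_A\})\le I([W]_k;X_T)$; summing over $k$ proves (a).

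For part (b) I would bound $\ex[|X_W-X_{t_0}|]\le\sum_k\ex|D_{U_k}|$ and establish, at each level, an absolute-value version of the mutual-information bound. The standard device is to double the index set with a sign: put $\widetilde D_{(A,\varepsilon)}\eqdef\varepsilon D_A$ for $\varepsilon\in\{\pm1\}$, still mean-zero and $(3\cdot2^{-k})^2$-subgaussian, and select with $\widetilde U_k\eqdef(U_k,\mathrm{sign}\,D_{U_k})$ so that $\widetilde D_{\widetilde U_k}=|D_{U_k}|$. Theorem \ref{Xu Raginsky Theorem} then gives $\ex|D_{U_k}|\le 3\sqrt2\,2^{-k}\sqrt{I(\widetilde U_k;\widetilde D)}$, and since $\widetilde D$ and $\{D_A\}$ carry the same information the chain rule bounds $I(\widetilde U_k;\widetilde D)=I((U_k,\mathrm{sign}\,D_{U_k});\{D_A\})\le I(U_k;\{D_A\})+H(\mathrm{sign}\,D_{U_k})\le I([W]_k;X_T)+\log2$, which is exactly the claimed summand.

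The main obstacle is not any single inequality but the justification of the telescoping identity and the interchange of expectation and summation. One must (i) use separability — choosing the representatives inside the countable separating set — to guarantee $X_{\pi_k(W)}\to X_W$ almost surely as $k\to\infty$, and (ii) control $\sum_k\ex|D_{U_k}|$ so that term-by-term expectation is valid, for instance by noting $I([W]_k;X_T)\le\log|\mathcal{P}_k|$ together with boundedness of the covering numbers; when the right-hand side of the theorem is infinite the statement is vacuous. A secondary care point is that each $\mathcal{P}_k$ should be taken finite (or countable) so that $[W]_k$ is discrete, the discrete–continuous mutual informations $I([W]_k;X_T)$ are well defined, and the uniform subgaussian parameter $(3\cdot2^{-k})^2$ required by Theorem \ref{Xu Raginsky Theorem} genuinely dominates all the per-cell parameters.
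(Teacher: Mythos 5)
Your proposal is correct and follows essentially the same route as the paper's proof: the same telescoping chain over partition representatives with the $3\cdot 2^{-k}$ triangle-inequality bound, Theorem~\ref{Xu Raginsky Theorem} plus data processing at each level (the paper indexes the level-$k$ increments by the pair $\left(\pi_{\mathcal{N}_k}(W),\pi_{\mathcal{N}_{k-1}}(W)\right)$ and then invokes the increasing-partition property to reduce to $[W]_k$, which is exactly your observation that both representatives are determined by $[W]_k$), the same sign-doubling device yielding the $\log 2$ in part (b), and the same appeal to separability to make the remainder $\ex\left[X_W-X_{\pi_{\mathcal{N}_n}(W)}\right]$ vanish. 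The paper handles the limiting step by keeping a finite chain plus an explicit remainder term rather than asserting almost-sure convergence of the infinite telescoping sum, which sidesteps the interchange-of-limits concern you flag, but this is a presentational rather than substantive difference.
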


Note that in Theorem \ref{Chaining MI Random Process} if we let $T\triangleq \mathcal{W}$ and $X_w\triangleq \mathrm{gen}(w)$ for all $w\in \mathcal{W}$, then for each $k\geq k_1(T)$, due to the Markov chain 
\begin{equation}
X_T=\{\mathrm{gen}(w)\}_{w\in \mathcal{W}}\leftrightarrow S \leftrightarrow W \leftrightarrow [W]_k,
\end{equation}
and the data processing inequality, we have $I([W]_k;X_T)\leq I([W]_k;S)$. Therefore Theorem \ref{Generalization with chaining MI} follows from Theorem \ref{Chaining MI Random Process}. The proof of Theorem \ref{Chaining MI Random Process} and the etymology of ``chaining mutual information" is given in Section \ref{Proof outline}. 

\begin{remark}\normalfont
For random processes other than subgaussian processes, where the tail of increments are controlled by a function $\psi$, similar results can be derived from Theorem \ref{Chaining Ineq Psi Theorem} in Section \ref{Proof of Chaining MI Theorem} of the supplementary material. 
\end{remark}
Both Theorem \ref{Generalization with chaining MI} and Theorem \ref{Chaining MI Random Process} capture the dependencies between the hypotheses by utilizing a metric $d$, and they are algorithm-dependent as the mutual information between the algorithm's discretized output and its input appears in their bounds. Now, to demonstrate the power of Theorem \ref{Chaining MI Random Process} and to compare it with the existing results in the literature, consider the following example:

\begin{example}\label{Running Example}\normalfont
Let $T$ be an arbitrary subset of $\mathbb{R}^n$, and $G^n\triangleq (G_1,...,G_n)\sim \mathcal{N}(0,I_n)$ be a standard normal random vector in $\mathbb{R}^n$. The \emph{canonical Gaussian process} is defined as $\{X_t\}_{t\in T}$, where
\begin{equation}
X_t\triangleq \langle t,G^n \rangle \textrm{ ~for all~ } t\in T.
\end{equation}
Note that $\{X_t\}_{t\in T}$ is a subgaussian process on the metric space $(T,d)$, where $d$ is the Euclidean distance.

Consider a canonical Gaussian process where $n=2$ and $T\triangleq\{t\in \mathbb{R}^2: \Vert t \Vert_2=1\}$. The process $\{X_t\}_{t\in T}$ can be reparameterized according to the phase of each point $t\in T$: the random variable $X_t$ can also be denoted as $X_{\phi}$, where $\phi\in [0,2\pi)$ is the phase of $t$. In other words, $\phi$ is the unique number in $[0,2\pi)$ such that $t=(\sin \phi, \cos \phi)$. Henceforth, we will assume the indices are in the phase form.

Let the relation between the input $X_T$ of an algorithm and its output $W$ be as 
\begin{equation}
W\triangleq\left(\textrm{argmax}_{\phi\in [0,2\pi)} X_{\phi}\right)\oplus Z  \textrm{~~(mod~}2\pi\textrm{)},
\end{equation}
where the noise $Z$ is independent from $X_T$, and has an atom with probability mass $\epsilon$ on $0$, and $1-\epsilon$ probability is uniformly distributed on $(-\pi,\pi)$. Note that since $Z$ has a singular (degenerate) part, $h(Z)=-\infty$.  

Due to symmetry, $W$ has uniform distribution over $[0,2\pi)$. But we have
\begin{align}
I(W;X_{T})&=h(W)-h(W|X_{T})\\
					&=\log 2\pi- h\left(\textrm{argmax}_{\phi\in [0,2\pi)} X_{\phi}\oplus Z \middle|X_{T}\right)\\
					&=\log 2\pi- h(Z|X_{T})\\
					&=\log 2\pi - h(Z)\\
					&=\infty.
\end{align} 

Hence the upper bound on $\ex[X_W]$ due to the mutual information method (Theorem \ref{Xu Raginsky Theorem}) blows up:
\begin{align}
\ex[X_W]&\leq \sqrt{2I(W;X_T)} =\infty.
\end{align}

Note that $2^{-(-2)}\geq \mathrm{diam}(T)=2$. Therefore let $k_1(T)\leftarrow -1$ and for all integers $k\geq -1$, define
\begin{align}
\pa_k\triangleq \left\{\left[0, \frac{2\pi}{2^{k+2}}\right), \left[\frac{2\pi}{2^{k+2}},2\times\frac{2\pi}{2^{k+2}}\right),..., \left[\left(2^{k+2}-1\right)\frac{2\pi}{2^{k+2}}, 2\pi\right)\right\}.
\end{align}

It is clear that $\{\mathcal{P}_{k}\}_{k=-1}^{\infty}$ is an increasing sequence of partitions of $T$. Furthermore, for each $k\geq -1$, the length of the arc of each set in $\pa_k$ is $\delta_k\triangleq \frac{2\pi}{2^{k+2}}<2^{1-k}$. Thus each $\pa_k$ is a $2^{-k}$-partition of $(T,d)$ and $|\pa_k|=2^{k+2}$ (see Figure \ref{Partition figure}).
\begin{figure}[h]
  \centering
  \fbox{\includegraphics[width=9cm]{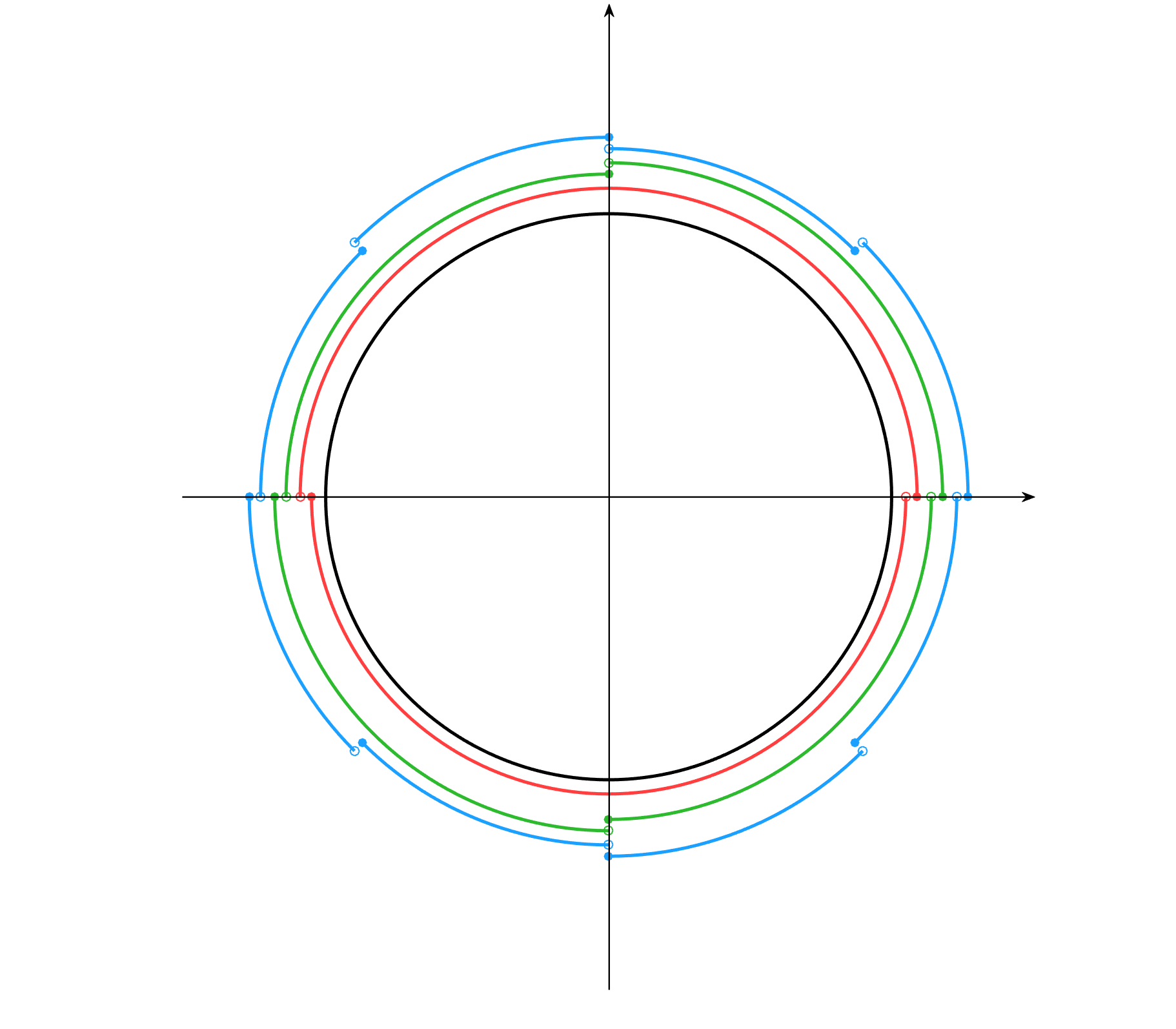}}{\rule[-.5cm]{0cm}{2cm} \rule[-.5cm]{7cm}{0cm}}
  \caption{Depiction of $T, \pa_{-1}, \pa_{0}$ and $\pa_{1}$ in the $\mathbb{R}^2$ plane. (The three partitions are magnified for clarity.)}
  \label{Partition figure}
\end{figure}

Now by using the classical chaining method (Theorem \ref{DudleyTheorem}) to upper bound $\ex[X_W]$ by upper bounding $\ex[\sup_{\phi\in[0,2\pi)}X_{\phi}]$ and ignoring the algorithm, we get 
\begin{align}
\ex [X_W]&\leq \ex\left[\sup_{\phi\in[0,2\pi)}X_{\phi}\right]\\
			   &\leq 3\sqrt{2}\sum_{k=-1}^{\infty}2^{-k}\sqrt{\log 2^{k+2}}\\
			   &=19.0352...\footnotemark
\end{align}
\footnotetext{The exact value of the bound in Theorem \ref{DudleyTheorem} is slightly smaller, since with our partitions we are using a rough approximate for the covering numbers. For example, at scale $2^{-(-1)}$, the covering number is $1$, while we have used partition $\mathcal{P}_{-1}$ with $\vert \mathcal{P}_{-1}\vert =2$ sets.}

On the other hand, for every $k\geq -1$ we have 
\begin{align}
I([W]_k;X_{T})&=H([W]_k)-H\left([W]_k|X_{T}\right)\\
					&=\log 2^{k+2}- H\left(\left[\left(\textrm{argmax}_{\phi\in [0,2\pi)} X_{\phi}\right)\oplus Z\right]_k \middle|X_{T}\right)\\
					&=\log 2^{k+2}- H\left(\epsilon+\frac{1-\epsilon}{2^{k+2}},\frac{1-\epsilon}{2^{k+2}},\dots,\frac{1-\epsilon}{2^{k+2}}\right).
\end{align} 

Therefore, based on the chained mutual information method (Theorem \ref{Chaining MI Random Process}), we have
\begin{align}
\ex [X_W]&\leq 3\sqrt{2}\sum_{k=-1}^{\infty}2^{-k}\sqrt{I([W]_k;X_T)}\\
			   &=3\sqrt{2}\sum_{k=-1}^{\infty}2^{-k}\sqrt{\log 2^{k+2}-H\left(\epsilon+\frac{1-\epsilon}{2^{k+2}},\frac{1-\epsilon}{2^{k+2}},\dots,\frac{1-\epsilon}{2^{k+2}}\right)}\label{CMI bound}
\end{align}

Numerical values of the right side of (\ref{CMI bound}) for different values of $\epsilon$ are given in Table \ref{Table of bounds} (CMI bound).
Note that indeed $I([W]_k;X_{T})\rightarrow I(W;X_T)=\infty$ as $k\to \infty$. However, the slow rate of that convergence and the existence of the $2^{-k}$ term makes the sum not only finite, but very small. In fact, as $\epsilon\to 0$, the right side of (\ref{CMI bound}) tends to $0$ as well.

It is interesting to notice that for this toy example, the exact values of $\ex\left[\sup_{\phi\in[0,2\pi)}X_{\phi}\right]$ and $\ex[X_W]$ can be computed. As $\sup_{\phi\in[0,2\pi)}X_{\phi}$ has a Rayleigh distribution, we have $\ex\left[\sup_{\phi\in[0,2\pi)}X_{\phi}\right]=\sqrt{\frac{\pi}{2}}=1.253...~ $. Since the noise $Z$ is independent from $X_T$, the effect of its continuous part cancels out, and we have $\ex[X_W]=\epsilon \sqrt{\frac{\pi}{2}}$. See Table \ref{Table of bounds}. 

\begin{table}[H]
  \caption{$\ex [X_W]$ and its upper bounds}
  \label{Table of bounds}
  \centering
  \begin{tabular}{c c c c c c c c}
  \toprule
    $\epsilon$ &  $\frac{1}{20}$ & $\frac{1}{30}$ & $\frac{1}{40}$ & $\frac{1}{50}$ & $\frac{1}{100}$ & $\frac{1}{200}$ & $\frac{1}{400}$  \\
    \midrule
    $2\sqrt{I(W;X_T)}$ & $\infty$ & $\infty$ & $\infty$ & $\infty$ & $\infty$ & $\infty$ & $\infty$\\
    \midrule
    Chaining bound & 19.0352 & 19.0352 & 19.0352 & 19.0352 & 19.0352 & 19.0352 & 19.0352\\
    \midrule
   CMI bound   & 1.1013  & 0.7507 & 0.5709 & 0.4612 & 0.2364 & 0.1204 & 0.0610  \\
   \midrule
   $\ex[X_W]$ & 0.0626 & 0.0417 & 0.0313 & 0.0250 & 0.0125 & 0.0062 & 0.0031 \\
    \bottomrule
  \end{tabular}
\end{table}
\end{example}

\begin{remark}\label{PJL Remark}\normalfont Notice that in Example \ref{Running Example} there exists an independent additive noise term $Z$ which has a degenerate part, causing the mutual information bound to blow up. Similarly, as discussed in \cite{PJL}, the mutual information bound for perturbed SGD or any iterative algorithm which adds degenerate noise in each iteration blows up. Example \ref{Running Example} illustrates that combining the mutual information method with the chaining method as in our bound could give tight generalization bounds for such algorithms as well.

\end{remark}
\begin{remark}\normalfont
	It is clear that having degenerate noise is not necessary to observe that the chained mutual information bound is tighter than the mutual information bound; this is just an extreme case for which the mutual information bound blows up. For instance, in Example \ref{Running Example}, one can replace $Z$ with a sequence of continuous random variables which converge to $Z$ in distribution.  
\end{remark}
\section{Proof outline}\label{Proof outline}
Here we provide an  outline of  the proof of Theorem \ref{Chaining MI Random Process}. As noted in Section \ref{Main results}, Theorem \ref{Generalization with chaining MI} follows from Theorem \ref{Chaining MI Random Process}. 

For an arbitrary $k\geq k_1(T)$, consider $\pa_k=\{A_1,A_2,...,A_m\}$. Since $\pa_k$ is a $2^{-k}$-partition of $(T,d)$, by definition there exists a set (or a multiset) $\mathcal{N}_k\triangleq\{a_1,a_2,...,a_m\}\subseteq T$ and a mapping $\pi_{\mathcal{N}_k}:T\to \mathcal{N}_k$ such that $\pi_{\mathcal{N}_k}(t)=a_i$ if $t\in A_i$, and further $d\left(t,\pi_{\mathcal{N}_k}(t)\right)\leq 2^{-k}$, for all $i=1,2,...,m$. Therefore $\mathcal{N}_k$ is a $2^{-k}$-net and $\pi_{\mathcal{N}_k}$ is its associated mapping.  It is also clear that for an arbitrary $t_0\in T$, $\mathcal{N}_{k_0}\triangleq \{t_0\}$ is a $2^{-(k_1(T)-1)}$-net. 
Note that for any integer $n\geq k_1(T)$ we can write
\begin{equation}\label{The chaining sum copy}
X_W=X_{t_0}+\sum_{k=k_1(T)}^n \left(X_{\pi_{\mathcal{N}_k}(W)}-X_{\pi_{\mathcal{N}_{k-1}}(W)}\right)+\left(X_W-X_{\pi_{\mathcal{N}_n}(W)}\right).
\end{equation}
Since by the definition of subgaussian processes the process is centered, we have $\ex [X_{t_0}]=0$. Thus 
\begin{equation}\label{ChainingSum copy}
\ex [X_W]-\ex \left[X_W-X_{\pi_{\mathcal{N}_n}(W)}\right]=\sum_{k=k_1(T)}^n \ex \left[X_{\pi_{\mathcal{N}_k}(W)}-X_{\pi_{\mathcal{N}_{k-1}}(W)}\right].
\end{equation}
For every $k\geq k_1(T)$, $\{X_{\pi_{\mathcal{N}_k}(t)}-X_{\pi_{\mathcal{N}_{k-1}}(t)}\}_{t\in T}$ is a subgaussian process with at most $|\mathcal{N}_k||\mathcal{N}_{k-1}|$ distinct terms, hence a finite process. Based on the triangle inequality,
\begin{align}
d\left(\pi_{\mathcal{N}_k}(t),\pi_{\mathcal{N}_{k-1}}(t)\right)&\leq d\left(t,\pi_{\mathcal{N}_k}(t)\right)+d\left(t,\pi_{\mathcal{N}_{k-1}}(t)\right)\nonumber\\
&\leq 3\times 2^{-k}.
\end{align} 
Note that knowing the value of $\left(\pi_{\mathcal{N}_k}(W),\pi_{\mathcal{N}_{k-1}}(W)\right)$ is enough to determine which one of the random variables $\left\{X_{\pi_{\mathcal{N}_k}(t)}-X_{\pi_{\mathcal{N}_{k-1}}(t)}\right\}_{t\in T}$ is chosen according to $W$. Therefore $\left(\pi_{\mathcal{N}_k}(W),\pi_{\mathcal{N}_{k-1}}(W)\right)$ is playing the role of the random index, and since $X_{\pi_{\mathcal{N}_k}(t)}-X_{\pi_{\mathcal{N}_{k-1}}(t)}$ is $d^2\left(\pi_{\mathcal{N}_k}(t),\pi_{\mathcal{N}_{k-1}}(t)\right)$-subgaussian, based on Theorem \ref{Xu Raginsky Theorem}, an application of data processing inequality and by summation, we obtain
\begin{equation}
\sum_{k=k_1(T)}^n\ex \left[X_{\pi_{\mathcal{N}_k}(W)}-X_{\pi_{\mathcal{N}_{k-1}}(W)}\right]\leq \sum_{k=k_1(T)}^n 3\sqrt{2}\times 2^{-k}\sqrt{I(\pi_{\mathcal{N}_k}(W),\pi_{\mathcal{N}_{k-1}}(W);X_T)}.\label{chaining mutual information ineq}
\end{equation}

Notice the chain of mutual information terms in the right side of (\ref{chaining mutual information ineq}). Since $\{\pa_k\}_{k=k_1(T)}^{\infty}$ is an increasing sequence of partitions, for any $t\in T$, knowing $\mathcal{N}_k(t)$ will uniquely determine $\mathcal{N}_{k-1}(t)$. Therefore
\begin{align}
I\left(\pi_{\mathcal{N}_k}(W),\pi_{\mathcal{N}_{k-1}}(W);X_{T}\right)&=I\left(\pi_{\mathcal{N}_k}(W);X_{T}\right)\\
							                                                      		    &= I\left([W]_k;X_{T}\right).
\end{align}
The rest of the proof follows from the definition of separable processes (Definition \ref{Separable process}). For more details, see proof of Theorem \ref{ChainingIneqMITheorem} in Section \ref{Proof of Chaining MI Theorem} of the supplementary material.  

\section{Additional result: small subset property}\label{Small set section}
We adjusted the conservative chaining method in random processes theory to learning problems by taking into account information about the algorithm, with the chained mutual information method.
In this section, we state a result in which such information could make the bounds much tighter.

It is known that for linear models, the stochastic gradient descent (SGD) algorithm always converges to a solution with small norm \cite{Zhang}. Inspired by this observation, we tighten Dudley's inequality (Theorem \ref{DudleyTheorem}), given the following regularization property: the output $W$ of an algorithm, with high probability, chooses a hypothesis from a subset of the hypothesis set with small covering numbers: 
\begin{theorem}[Small subset property]\label{PartitionTheorem}
Assume that $\{X_t\}_{t\in T}$ is a separable subgaussian process on the bounded metric space $(T,d)$. Let $\{T_1, T_2\}$ be a partition of $T$ and assume that $W$ is a random variable taking values on $T$ with $\pr[W\in T_1]=\alpha$. Then we have
\begin{align}\label{Partition Theorem inequality}
\ex[X_W]\leq &6\sum_{k=k_1(T)}^{\infty}2^{-k}\sqrt{\alpha\log N( T_1,d,2^{-k})+(1-\alpha)\log N( T_2,d,2^{-k})+H(\alpha)}.
\end{align}
\end{theorem}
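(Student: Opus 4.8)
The plan is to prove the bound by a direct chaining argument over nets adapted to the partition $\{T_1,T_2\}$, applying the mutual information inequality of Theorem \ref{Xu Raginsky Theorem} increment by increment exactly as in the proof outline of Theorem \ref{Chaining MI Random Process}, but keeping track of which part $W$ falls into so that the covering numbers of $T_1$ and $T_2$ enter separately.

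First I would set up the chain. For each $k\geq k_1(T)$ and each $i\in\{1,2\}$ fix a minimal $2^{-k}$-net $\mathcal{N}_k^{(i)}$ of $(T_i,d)$, so $|\mathcal{N}_k^{(i)}|=N(T_i,d,2^{-k})$, and let $\mathcal{N}_k\triangleq\mathcal{N}_k^{(1)}\cup\mathcal{N}_k^{(2)}$, a $2^{-k}$-net of $T$. Let $\pi_k$ send each $t$ to a nearest point of $\mathcal{N}_k$ lying in the same part as $t$, so that $d(t,\pi_k(t))\leq 2^{-k}$ and $\pi_k(t)\in T_1$ iff $t\in T_1$. Fixing a base point $t_0\in T$ and setting $\mathcal{N}_{k_1(T)-1}\triangleq\{t_0\}$, the telescoping identity
\begin{equation*}
X_W=X_{t_0}+\sum_{k=k_1(T)}^{n}\left(X_{\pi_k(W)}-X_{\pi_{k-1}(W)}\right)+\left(X_W-X_{\pi_n(W)}\right)
\end{equation*}
holds for every $n$; taking expectations, using $\ex[X_{t_0}]=0$ by centering, and letting $n\to\infty$, the residual term $\ex[X_W-X_{\pi_n(W)}]$ vanishes by separability (Definition \ref{Separable process}), precisely as in the proof of Theorem \ref{Chaining MI Random Process}.

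The main step is to bound the expectation of each increment. The process $\{X_{\pi_k(t)}-X_{\pi_{k-1}(t)}\}_{t\in T}$ is centered, finite, and, since $d(\pi_k(t),\pi_{k-1}(t))\leq d(t,\pi_k(t))+d(t,\pi_{k-1}(t))\leq 3\cdot 2^{-k}$, is $(3\cdot 2^{-k})^2$-subgaussian, while the pair $(\pi_k(W),\pi_{k-1}(W))$ is its random index. Applying Theorem \ref{Xu Raginsky Theorem} and then $I(\pi_k(W),\pi_{k-1}(W);X_T)\leq H(\pi_k(W),\pi_{k-1}(W))$ gives
\begin{equation*}
\ex\left[X_{\pi_k(W)}-X_{\pi_{k-1}(W)}\right]\leq 3\cdot 2^{-k}\sqrt{2\,H\!\left(\pi_k(W),\pi_{k-1}(W)\right)}.
\end{equation*}
Now let $B\triangleq\mathbbm{1}[W\in T_1]$; because the nets respect the parts, $B$ is a function of the index pair, whence $H(\pi_k(W),\pi_{k-1}(W))=H(\alpha)+\alpha\,H(\cdot\mid B=1)+(1-\alpha)\,H(\cdot\mid B=0)$. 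Conditioned on $B=i$ the pair lives in $\mathcal{N}_k^{(i)}\times\mathcal{N}_{k-1}^{(i)}$, so its conditional entropy is at most $\log\!\big(N(T_i,d,2^{-k})N(T_i,d,2^{-(k-1)})\big)\leq 2\log N(T_i,d,2^{-k})$ by monotonicity of covering numbers in the scale. Substituting, factoring the $2$ out of the square root, and bounding $\tfrac12 H(\alpha)\leq H(\alpha)$ turns $3\sqrt2$ into $6$ and yields the summand $6\cdot 2^{-k}\sqrt{\alpha\log N(T_1,d,2^{-k})+(1-\alpha)\log N(T_2,d,2^{-k})+H(\alpha)}$; summing over $k$ finishes the argument.

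I expect the delicate points to be two-fold. The minor one is the vanishing of the residual term, which I would dispatch by invoking separability exactly as in Theorem \ref{Chaining MI Random Process}. The genuine crux is arranging the chaining so that the part-indicator $B$ is recoverable from the index and so that $N(T_1,\cdot)$ and $N(T_2,\cdot)$ — rather than $N(T,\cdot)$ — govern the entropies; this is what makes the bound small when $W$ concentrates on a low-complexity subset. It is also worth flagging the trade-off behind the constant: using union-type nets here forces the factor $\log(N_kN_{k-1})\leq 2\log N_k$ and hence the constant $6$, whereas an increasing sequence of partitions adapted to $\{T_1,T_2\}$ would let one invoke Theorem \ref{Chaining MI Random Process} directly and improve $6$ to $3\sqrt2$, at the cost of the more delicate nested-partition construction.
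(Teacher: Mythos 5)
Your proposal is correct and follows essentially the same route as the paper's proof: a part-respecting union of minimal nets for $T_1$ and $T_2$, the standard chaining telescope with separability killing the residual, the bound $I(\pi_k(W),\pi_{k-1}(W);X_T)\leq H(\pi_k(W),\pi_{k-1}(W))$, and conditioning on the part indicator together with monotonicity of covering numbers to reach the constant $6$. The only (immaterial) difference is that you bound the joint entropy directly via the indicator $B$, whereas the paper first splits it by subadditivity into $H(\pi_k(W))+H(\pi_{k-1}(W))$ and conditions each term on $B$; both yield the identical final expression.
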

Proof of Theorem \ref{PartitionTheorem} appears in Section \ref{Proof of Chaining MI Theorem} of the supplementary material. Note that the right side of (\ref{Partition Theorem inequality}) becomes much smaller than Dudley's bound when $\alpha$ is close to $1$ and the covering numbers of $T_1$ (the small subset) are much smaller than the covering numbers of $T_2$. 
\begin{remark}\normalfont
One can upper bound the right side of (\ref{Partition Theorem inequality}) by replacing $N(T_2,d,2^{-k})$ with $N(T,d,2^{-k})$. This is particularly useful when bounding the latter is easier than the former.
\end{remark} 
\section{Conclusion}
We combined ideas from information theory and from high dimensional probability to obtain a generalization bound that takes into account both the dependencies between the hypotheses and the dependence between the input and the output of a learning algorithm. We showed on an example that our chained mutual information bound significantly outperforms previous bounds and gets close to the true generalization error. Under a natural regularization property of the learning algorithm, we provided a corollary of our bound which tightens Dudley's inequality; i.e. when the learning algorithm chooses its output from a small subset of hypotheses with high probability.
\section{Acknowledgments}
We gratefully acknowledge discussions with Ramon van Handel on the topic of chaining. This work was partly supported by the NSF CAREER Award CCF-1552131.
\bibliographystyle{unsrt}
\bibliography{biblio}

\newpage
\begin{appendix}

In section \ref{Finite Processes} which deals with finite random processes and which serves as the basic foundation of chaining, the known results of maximal inequality (Proposition \ref{MaximalInequalityTheorem}) and its improvement via mutual information (Theorem \ref{FiniteExpectationMI}) are reviewed. Then we give a condition for a random process in Corollary \ref{AbsoluteValueFiniteCorollary}, for which the result of Theorem \ref{FiniteExpectationMI} can be improved by upper \emph{and} lower bounding $\ex[X_W]$. The aforementioned results concern $\ex[X_W]$; in Theorem \ref{FiniteTailMI} we obtain inequalities for the tail behavior of $X_W$.  

In the next step of building upon the results of section \ref{Finite Processes}, to be able to handle infinite processes, in section \ref{LipschitzProcSection} we introduce the notion of \emph{$\epsilon$-nets} (see Definition \ref{epsilonnet}) and its related definitions, and in Theorem \ref{LipschitzWithMITheorem} we upper bound $\ex[X_W]$ for \emph{Lipschitz processes} (see Definition \ref{lipschitzprocess}) using mutual information. This is the strengthened version of the so-called $\epsilon$-net argument, with the usage of mutual information. Remark \ref{AbsoluteValueLipschitzRemark} discusses upper bounding $|\ex[X_W]|$ for Lipschitz processes.

In the last step, in section \ref{Proof of Chaining MI Theorem}, we loosen the ``almost sure'' Lipschitz condition of the dependencies of the random variables of a process to a ``in probability'' condition, defined as subgaussian processes (see Definition \ref{SubgaussianProcessDefinition2}). After reviewing the classical chaining result of Dudley's inequality (Theorem \ref{DudleyTheorem2}), we combine the mutual information method and the chaining method in Theorem \ref{ChainingIneqMITheorem} for subgaussian processes, and in Thoerem \ref{Chaining Ineq Psi Theorem} for more general processes.
\section{Preliminaries}
\begin{definition}[Cumulant generating function] \label{CumulantGenFunctionDefinition} Let $X$ be a real valued random variable. The \emph{cumulant generating function} of $X$ is defined as $\Lambda_X(\lambda)\triangleq\log \ex[e^{\lambda X}]$ for all $\lambda\in\mathbb{R}$.
\end{definition}
The following lemma is a well known fact about the cumulant generating function:
\begin{lemma}\label{CumulantGFExpectation}
Let $X$ be a random variable. Then its cumulant generating function $\Lambda_X$ is convex, $\Lambda_X(0)=0$ and $\Lambda'_X(0)=\ex [X]$.
\end{lemma}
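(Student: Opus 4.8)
The plan is to verify the three assertions separately; the value at zero and the derivative at zero are short computations, while convexity is handled by a standard integral inequality. First I would dispatch $\Lambda_X(0)=0$ directly from the definition: $\Lambda_X(0)=\log \ex[e^{0\cdot X}]=\log \ex[1]=\log 1=0$.

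For convexity I would avoid any second-derivative computation (which would force smoothness hypotheses) and instead invoke H\"older's inequality. Fix $\lambda_1,\lambda_2\in\mathbb{R}$ and $\theta\in[0,1]$, and write $e^{(\theta\lambda_1+(1-\theta)\lambda_2)X}=\bigl(e^{\lambda_1 X}\bigr)^{\theta}\bigl(e^{\lambda_2 X}\bigr)^{1-\theta}$. Applying H\"older's inequality with conjugate exponents $p=1/\theta$ and $q=1/(1-\theta)$ (so that $1/p+1/q=\theta+(1-\theta)=1$) gives
\begin{equation}
\ex\bigl[e^{(\theta\lambda_1+(1-\theta)\lambda_2)X}\bigr]\leq \bigl(\ex[e^{\lambda_1 X}]\bigr)^{\theta}\bigl(\ex[e^{\lambda_2 X}]\bigr)^{1-\theta}.
\end{equation}
Taking logarithms of both sides and using that $\log$ is increasing yields $\Lambda_X(\theta\lambda_1+(1-\theta)\lambda_2)\leq \theta\,\Lambda_X(\lambda_1)+(1-\theta)\,\Lambda_X(\lambda_2)$, which is exactly the convexity inequality; the endpoint cases $\theta\in\{0,1\}$ are trivial.

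For the derivative at the origin, I would differentiate under the expectation: formally $\Lambda_X'(\lambda)=\ex[X e^{\lambda X}]/\ex[e^{\lambda X}]$, so that evaluating at $\lambda=0$ and using $\ex[e^{0}]=1$ gives $\Lambda_X'(0)=\ex[X]$. The main obstacle, and the only step requiring real care, is justifying the interchange of differentiation and expectation. This tacitly requires the moment generating function $\lambda\mapsto\ex[e^{\lambda X}]$ to be finite on an open neighborhood of $0$ (an assumption that holds for all the subgaussian processes treated in the paper). Under that hypothesis I would argue by dominated convergence: on a compact interval $[-\delta,\delta]$ on which the moment generating function is finite, the difference quotients $\tfrac{1}{h}\bigl(e^{hX}-1\bigr)$ are bounded in absolute value, for $|h|\leq\delta$, by the integrable envelope $|X|\bigl(e^{\delta X}+e^{-\delta X}\bigr)$, which licenses passing the limit inside the expectation. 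This shows that $\lambda\mapsto\ex[e^{\lambda X}]$ is differentiable at $0$ with derivative $\ex[X]$; the chain rule together with $\ex[e^{0}]=1$ then gives $\Lambda_X'(0)=\ex[X]/\ex[e^{0}]=\ex[X]$, completing the proof.
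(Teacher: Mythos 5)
The paper states this lemma as a well-known fact and gives no proof of its own, so there is no in-paper argument to compare against; your proof is the standard textbook one and is essentially correct. The computation $\Lambda_X(0)=\log\ex[e^{0\cdot X}]=0$ and the H\"older argument for convexity are exactly right (and convexity via H\"older has the virtue of remaining valid even where the moment generating function is $+\infty$). On the derivative assertion you correctly identify the real issue: with no hypotheses on $X$, the claim $\Lambda_X'(0)=\ex[X]$ is not even meaningful in general (a Cauchy random variable has $\Lambda_X(\lambda)=+\infty$ for all $\lambda\neq 0$ and no mean), so finiteness of the moment generating function near the origin is tacitly required; this holds everywhere the paper invokes the lemma, since the relevant increments are subgaussian and thus have everywhere-finite moment generating functions.

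One small technical slip in the dominated convergence step: integrability of your envelope $|X|\bigl(e^{\delta X}+e^{-\delta X}\bigr)$ does not follow from finiteness of the moment generating function on $[-\delta,\delta]$ alone; at an endpoint of the finiteness interval one can have $\ex[e^{\delta X}]<\infty$ while $\ex[|X|e^{\delta X}]=\infty$ (take a density decaying like $e^{-\delta x}x^{-2}$). The fix is routine: assume the moment generating function is finite on $[-\delta_0,\delta_0]$, restrict the difference quotients to $|h|\leq\delta$ with $\delta<\delta_0$, and use $|X|\leq \tfrac{1}{\delta_0-\delta}\,e^{(\delta_0-\delta)|X|}$ to dominate them by a constant multiple of $e^{\delta_0 X}+e^{-\delta_0 X}$, which is integrable by hypothesis. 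With that adjustment your argument is complete and is, if anything, more careful than the paper's unproved statement of the lemma.
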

An important and widely used class of random variables is the class of subgaussian random variables:
\begin{definition}[Subgaussian random variables]\label{Subgaussian random variables} The random variable $X$ is called \emph{$\sigma^2$-subgaussian} if $\ex[e^{\lambda(X-\ex X)}]\leq e^{\frac{\lambda^2\sigma^2}{2}}$ for all $\lambda\in\mathbb{R}$. In particular, if $X$ is $\sigma^2$-subgaussian and $\ex[X]=0$, then its cumulant generating function satisfies $\Lambda_X(\lambda)\leq \frac{\lambda^2\sigma^2}{2}$ for all $\lambda\in\mathbb{R}$. The constant $\sigma^2$ is called the \emph{variance proxy}.
\end{definition}

We will use the notion of Legendre dual, defined as follows, in our bounds.
\begin{definition}[Legendre dual] \label{LegendreDualDefinition}
For a convex function $\psi: \mathbb{R}_{+}\to \mathbb{R}$, the \emph{Legendre dual} $\psi^*:\mathbb{R}\rightarrow\mathbb{R}$ is defined as 
\begin{equation}
\psi^*(x)\triangleq\sup_{\lambda\geq 0}\{\lambda x-\psi(\lambda)\} \textrm{ ~for all ~} x\in\mathbb{R}.
\end{equation}
\end{definition}
For a proof of the next lemma see \cite[p. 115]{Ramon}:
\begin{lemma}[Legendre dual properties]\label{LegendreDualProperties} Let $\psi: \mathbb{R}_{+}\rightarrow \mathbb{R}$ be a convex function and $\psi(0)=\psi'(0)=0$. Then $\psi^*(x)$ is a convex, strictly increasing, nonnegative and unbounded function for $x\geq 0$, and $\psi^*(0)=0$. Therefore its inverse $\ps(y)$ is well defined for $y\geq 0$. 
\end{lemma}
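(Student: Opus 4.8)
The final statement is Lemma \ref{LegendreDualProperties} concerning the Legendre dual properties.

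\textbf{Plan.} The goal is to establish, for a convex $\psi:\mathbb{R}_+\to\mathbb{R}$ with $\psi(0)=\psi'(0)=0$, that $\psi^*$ restricted to $x\geq 0$ is convex, strictly increasing, nonnegative, unbounded, and satisfies $\psi^*(0)=0$; this then makes $\ps$ well defined on $[0,\infty)$. The natural approach is to exploit the definition $\psi^*(x)=\sup_{\lambda\geq 0}\{\lambda x-\psi(\lambda)\}$ directly, since each of the five properties follows from a standard feature of suprema of affine families. First I would prove convexity: for fixed $\lambda\geq 0$, the map $x\mapsto \lambda x-\psi(\lambda)$ is affine (hence convex) in $x$, and a pointwise supremum of convex functions is convex, so $\psi^*$ is convex on all of $\mathbb{R}$, in particular on $[0,\infty)$.

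\textbf{The value at zero and nonnegativity.} Next I would compute $\psi^*(0)$. By definition $\psi^*(0)=\sup_{\lambda\geq 0}\{-\psi(\lambda)\}=-\inf_{\lambda\geq 0}\psi(\lambda)$. Here I would use the hypotheses $\psi(0)=\psi'(0)=0$ together with convexity of $\psi$: a convex function with zero derivative at an interior-boundary point attains its minimum there, so $\psi(\lambda)\geq \psi(0)=0$ for all $\lambda\geq 0$, giving $\inf_{\lambda\geq 0}\psi(\lambda)=0$ and hence $\psi^*(0)=0$. Nonnegativity of $\psi^*$ on $[0,\infty)$ is then immediate: for $x\geq 0$ the choice $\lambda=0$ in the supremum already yields $\psi^*(x)\geq 0\cdot x-\psi(0)=0$.

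\textbf{Monotonicity and unboundedness.} For strict monotonicity on $x\geq 0$, I would argue that for any $x'>x\geq 0$ one can pick a feasible $\lambda>0$ whose value $\lambda x-\psi(\lambda)$ is positive (such a $\lambda$ exists because the right-derivative of $x\mapsto \psi^*(x)$ is nonnegative and, using that $\psi$ is finite, one can find $\lambda>0$ with $\lambda x>\psi(\lambda)$ unless $x=0$); since the supremand increases strictly in $x$ along that fixed positive $\lambda$, we get $\psi^*(x')\geq \lambda x'-\psi(\lambda)>\lambda x-\psi(\lambda)$, and optimizing shows $\psi^*(x')>\psi^*(x)$. Unboundedness follows similarly: fixing any single $\lambda_0>0$ gives the linear lower bound $\psi^*(x)\geq \lambda_0 x-\psi(\lambda_0)\to\infty$ as $x\to\infty$. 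Finally, strict monotonicity plus continuity (which follows from convexity and finiteness) and unboundedness with $\psi^*(0)=0$ give a continuous strictly increasing surjection from $[0,\infty)$ onto $[0,\infty)$, so the inverse $\ps(y)$ is well defined for $y\geq 0$.

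\textbf{Main obstacle.} The routine parts (convexity, nonnegativity, unboundedness) are nearly automatic from the sup-of-affine structure. The delicate step is \emph{strict} monotonicity: one must rule out a flat piece of $\psi^*$ on $[0,\infty)$, which requires genuinely using that $\psi$ is finite (so the affine minorants have unbounded slopes available) rather than only convex. I expect the cleanest route is to invoke the relationship between the subdifferential of $\psi^*$ and the domain of $\psi$, or to argue directly that the maximizing $\lambda$ in the definition is strictly positive for every $x>0$; this is where care is needed, and where I would lean on the cited reference \cite[p.~115]{Ramon} for the standard convex-analytic facts rather than reproving them.
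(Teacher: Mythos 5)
The paper never proves this lemma: it is stated with a pointer to \cite[p.~115]{Ramon}, so there is no in-paper argument to compare against and your proposal has to stand on its own. Most of it does. Convexity (pointwise supremum of affine functions), the computation $\psi^*(0)=-\inf_{\lambda\geq 0}\psi(\lambda)=0$ (using that convexity and $\psi(0)=\psi'(0)=0$ force $\psi\geq 0$ on $\mathbb{R}_+$), nonnegativity via the choice $\lambda=0$, and unboundedness via the linear minorant $x\mapsto \lambda_0 x-\psi(\lambda_0)$ for any fixed $\lambda_0>0$ are all correct and are the standard arguments.

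The gap is exactly where you flagged it, and it is genuine on two counts. First, your justification for the existence of $\lambda>0$ with $\lambda x>\psi(\lambda)$ when $x>0$ invokes finiteness of $\psi$ and nonnegativity of the right derivative of $\psi^*$; neither is the operative hypothesis. What makes the claim true is $\psi'(0)=0$: since $\psi(\lambda)/\lambda\to\psi'(0)=0$ as $\lambda\downarrow 0$, for every $x>0$ there is a small $\lambda>0$ with $\psi(\lambda)<\lambda x$, hence $\psi^*(x)>0$. (For $\psi(\lambda)=c\lambda$ with $c>0$, which is finite and convex but has $\psi'(0)\neq 0$, no such $\lambda$ exists when $x<c$; so finiteness alone cannot carry this step.) Second, and more seriously, the inference ``$\psi^*(x')\geq \lambda x'-\psi(\lambda)>\lambda x-\psi(\lambda)$, and optimizing shows $\psi^*(x')>\psi^*(x)$'' is a non sequitur: the strict inequality you obtain is against the value of one supremand at $x$, not against the supremum $\psi^*(x)$, and passing to suprema turns strict inequalities into weak ones (the gap $\lambda(x'-x)$ can vanish along a maximizing sequence with $\lambda\to 0$). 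A clean repair uses only facts you have already established: for $0\leq x<x'$, convexity and $\psi^*(0)=0$ give
\begin{equation*}
\psi^*(x)=\psi^*\left(\tfrac{x}{x'}\,x'+\left(1-\tfrac{x}{x'}\right)\cdot 0\right)\leq \tfrac{x}{x'}\,\psi^*(x')<\psi^*(x'),
\end{equation*}
where the last inequality is strict because $\psi^*(x')>0$ by the corrected positivity argument; this also covers $x=0$. (One caveat, inherited from the lemma statement itself: $\psi^*$ may take the value $+\infty$, e.g.\ for $\psi(\lambda)=\log\cosh\lambda$, in which case strict monotonicity and the inverse must be read on the interval where $\psi^*$ is finite, or one uses the generalized inverse $\psi^{*-1}(y)=\inf_{\lambda>0}\{(y+\psi(\lambda))/\lambda\}$ as in the cited reference.) With that repair, your concluding step --- continuity of a finite convex function, plus strict monotonicity and unboundedness, yielding a well-defined inverse on $[0,\infty)$ --- goes through.
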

From Definition \ref{Subgaussian random variables}, if $X$ is $\sigma^2$-subgaussian and $\ex[X]=0$ then $\Lt\leq \frac{\lambda^2\sigma^2}{2}$. The following lemma gives the Legendre dual inverse of $\psi(\lambda)\triangleq\frac{\lambda^2\sigma^2}{2}$. 
\begin{lemma} Let $\psi(\lambda)\triangleq\frac{\lambda^2\sigma^2}{2}$ for all $\lambda\geq 0$. Then $\ps(x)=\sqrt{2\sigma^2 x}$ for all $x\in\mathbb{R}$.
\end{lemma}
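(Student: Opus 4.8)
The plan is to compute the Legendre dual $\psi^*$ in closed form and then invert it on the range where Lemma \ref{LegendreDualProperties} guarantees the inverse exists. First I would evaluate the supremum in Definition \ref{LegendreDualDefinition}. Fixing $x\geq 0$, the objective $\lambda\mapsto \lambda x-\tfrac{1}{2}\lambda^2\sigma^2$ is a downward parabola in $\lambda$, hence concave and differentiable, and its stationary point solves $x-\lambda\sigma^2=0$, i.e. $\lambda^\star=x/\sigma^2$. Since $x\geq 0$, this maximizer is nonnegative and therefore admissible in the constrained supremum over $\lambda\geq 0$. Substituting $\lambda^\star$ back yields
\[
\psi^*(x)=\frac{x^2}{\sigma^2}-\frac{x^2}{2\sigma^2}=\frac{x^2}{2\sigma^2}.
\]

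Next I would invert this expression. By Lemma \ref{LegendreDualProperties}, $\psi^*$ is strictly increasing and nonnegative on $x\geq 0$ with $\psi^*(0)=0$, so its inverse $\ps$ is well defined for $y\geq 0$. Setting $y=\psi^*(x)=\frac{x^2}{2\sigma^2}$ and solving for the nonnegative root gives $x=\sqrt{2\sigma^2 y}$, which is exactly the claimed formula $\ps(y)=\sqrt{2\sigma^2 y}$.

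There is essentially no hard step here; the only point requiring a moment's care is the constraint $\lambda\geq 0$ appearing in the definition of the Legendre dual. For the relevant regime $x\geq 0$ this constraint is inactive, since the unconstrained maximizer $x/\sigma^2$ is already nonnegative, so the constrained and unconstrained suprema coincide. For $x<0$ the maximizer would instead sit at the boundary $\lambda=0$, giving $\psi^*(x)=0$; but this case is immaterial, because the inverse $\ps$ is only ever invoked on $y\geq 0$, which matches the range of $\psi^*$ along its strictly increasing branch, consistent with the square-root formula in the statement.
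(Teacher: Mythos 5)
Your computation is correct and is exactly the routine verification the paper omits (the lemma is stated without proof): maximizing $\lambda x-\tfrac12\lambda^2\sigma^2$ over $\lambda\geq 0$ gives $\psi^*(x)=x^2/(2\sigma^2)$ on $x\geq 0$, and inverting on the nonnegative branch yields $\sqrt{2\sigma^2 y}$. Your remark that the constraint $\lambda\geq 0$ is inactive for $x\geq 0$ and that the inverse is only invoked on $y\geq 0$ (so the paper's ``for all $x\in\mathbb{R}$'' should really read $x\geq 0$) is the right way to handle the only delicate point.
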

The following is the well-known Chernoff bound:
\begin{lemma}[Chernoff]\label{ChernoffBoundLemma} Let $X$ be a random variable, and $\psi$ be a function such that $\Lambda_{X}(\lambda)\leq \psi(\lambda)$ for all $\lambda\geq 0$. Then
\begin{equation}
\pr[X\geq x]\leq e^{-\psi^*(x)} \textrm{ ~ for all ~ } x\in \mathbb{R}.
\end{equation}
\end{lemma}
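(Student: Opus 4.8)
The plan is to use the classical Chernoff (exponential Markov) method and then optimize over the free parameter to convert the exponent into precisely the Legendre dual $\psi^*$. The key observation is that the definition $\psi^*(x)=\sup_{\lambda\geq 0}\{\lambda x-\psi(\lambda)\}$ optimizes over exactly the same range $\lambda\geq 0$ that the Chernoff argument naturally produces, so the optimized tail exponent will coincide with $\psi^*$ by construction.

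First I would fix an arbitrary $\lambda\geq 0$ and rewrite the event $\{X\geq x\}$ as $\{e^{\lambda X}\geq e^{\lambda x}\}$, using that $z\mapsto e^{\lambda z}$ is nondecreasing for $\lambda\geq 0$. Applying Markov's inequality to the nonnegative random variable $e^{\lambda X}$ gives
\begin{equation}
\pr[X\geq x]=\pr\!\left[e^{\lambda X}\geq e^{\lambda x}\right]\leq e^{-\lambda x}\,\ex\!\left[e^{\lambda X}\right]=e^{\Lambda_X(\lambda)-\lambda x},
\end{equation}
where the last equality is just the definition of the cumulant generating function (Definition \ref{CumulantGenFunctionDefinition}). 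Next I would invoke the hypothesis $\Lambda_X(\lambda)\leq\psi(\lambda)$ to replace $\Lambda_X(\lambda)$ by $\psi(\lambda)$, yielding $\pr[X\geq x]\leq e^{\psi(\lambda)-\lambda x}$ for every $\lambda\geq 0$. Since the left-hand side does not depend on $\lambda$, I would then take the infimum over $\lambda\geq 0$ of the right-hand side, and use monotonicity of the exponential to pull the infimum inside:
\begin{equation}
\pr[X\geq x]\leq \inf_{\lambda\geq 0}e^{\psi(\lambda)-\lambda x}=e^{-\sup_{\lambda\geq 0}\{\lambda x-\psi(\lambda)\}}=e^{-\psi^*(x)},
\end{equation}
which is exactly the claimed bound.

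There is no genuine obstacle in this argument; it is a routine optimization of the exponential moment bound. The only points warranting a sentence of care are the edge cases. For $x\leq 0$ (or more generally whenever the supremum defining $\psi^*(x)$ is attained at $\lambda=0$), the choice $\lambda=0$ already gives $\pr[X\geq x]\leq e^{\psi(0)}$, so the bound is nonvacuous and the optimization still returns $e^{-\psi^*(x)}$, consistent with $\psi^*(x)\geq 0$ guaranteed by Lemma \ref{LegendreDualProperties}. Thus the inequality holds uniformly for all $x\in\mathbb{R}$, and no convexity or differentiability of $\psi$ beyond what is needed to define $\psi^*$ is required for the argument itself.
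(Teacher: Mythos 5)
Your proof is correct: it is the standard Chernoff--Markov argument (exponentiate, apply Markov's inequality, bound $\Lambda_X$ by $\psi$, and optimize over $\lambda\geq 0$), which is precisely the classical proof the paper invokes when it states this lemma as the ``well-known Chernoff bound'' without supplying a proof of its own. Your handling of the edge cases, including that the bound is trivially valid at $\lambda=0$ since $\psi(0)\geq\Lambda_X(0)=0$, is also sound.
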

The variational representation of relative entropy is a useful information theoretic tool:
\begin{theorem}[Variational representation of relative entropy]\label{Donsker-Varadhan} Let $X$ and $Y$ be random variables taking values on $\mathcal{A}$ with distributions $P_X$ and $P_Y$, respectively. Then
\begin{equation}
D(P_X\|P_Y)=\max_{f\in\mathtt{F}}\left\{\ex\left[f(X)\right]-\log\ex\left[e^{f(Y)}\right]\right\},
\end{equation}
where the maximum is with respect to $\mathtt{F}=\left\{f:\mathcal{A}\rightarrow \mathbb{R} \mathrm{ ~s. t.~ } \ex[e^{f(Y)}]<\infty\right\}$, and is achieved by $f^*(a)=\imath_{X\|Y}(a)$.
\end{theorem}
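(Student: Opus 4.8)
The plan is to establish the identity by proving that the objective is bounded above by $D(P_X\|P_Y)$ for every admissible $f$, and then exhibiting $f^*=\imath_{X\|Y}$ as an explicit maximizer that attains this bound. Throughout I would work under the standing assumption $P_X\ll P_Y$, so that the relative information $\imath_{X\|Y}(a)=\log\frac{dP_X}{dP_Y}(a)$ is well defined $P_Y$-almost everywhere; in the complementary case $P_X\not\ll P_Y$ both sides equal $+\infty$ (one can drive the right side to $+\infty$ by concentrating $f$ on the set carrying $P_X$-mass but no $P_Y$-mass), so the identity holds trivially.

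For the upper bound $\ex[f(X)]-\log\ex[e^{f(Y)}]\leq D(P_X\|P_Y)$, valid for every $f\in\mathtt{F}$, the key device is the exponentially tilted probability measure $Q$ defined by $\frac{dQ}{dP_Y}\eqdef\frac{e^{f}}{\ex[e^{f(Y)}]}$, which is a genuine probability measure precisely because $f\in\mathtt{F}$ guarantees the normalizing constant is finite and positive. I would then expand $D(P_X\|Q)$ using the chain rule $\frac{dP_X}{dQ}=\frac{dP_X}{dP_Y}\cdot\frac{dP_Y}{dQ}$ and take expectations under $P_X$, which yields the clean identity $D(P_X\|Q)=D(P_X\|P_Y)-\ex[f(X)]+\log\ex[e^{f(Y)}]$. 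The desired inequality follows immediately from Gibbs' inequality $D(P_X\|Q)\geq 0$, with equality exactly when $Q=P_X$.

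For achievability, the plan is to substitute the candidate $f^*=\imath_{X\|Y}$ and verify directly that it both lies in $\mathtt{F}$ and saturates the bound. The crucial check is the exponential moment: $\ex[e^{f^*(Y)}]=\ex_{P_Y}\!\left[\frac{dP_X}{dP_Y}\right]=\int dP_X=1$, which confirms $f^*\in\mathtt{F}$ and gives $\log\ex[e^{f^*(Y)}]=0$. The first term is then $\ex[f^*(X)]=\ex_{P_X}[\imath_{X\|Y}]=D(P_X\|P_Y)$ by the very definition of relative entropy. Hence the objective evaluated at $f^*$ equals $D(P_X\|P_Y)$, matching the upper bound; this proves the supremum is attained and is in fact a maximum achieved by $f^*$.

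The main obstacle I anticipate is not conceptual but rather integrability bookkeeping: one must ensure $\ex[f(X)]$ is well defined and not an indeterminate $\infty-\infty$ form when invoking the bound, and confirm that the candidate optimizer genuinely belongs to $\mathtt{F}$, which the computation $\ex[e^{f^*(Y)}]=1$ settles cleanly. A secondary subtlety is the degenerate case $D(P_X\|P_Y)=\infty$ under absolute continuity, where the tilting argument still delivers the upper bound as $+\infty$ and the optimizer identity persists by a monotone truncation of $f^*$ followed by passage to the limit.
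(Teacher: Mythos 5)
Your proof is correct. Note that the paper does not actually prove this theorem: it states it as a known classical tool (the Donsker--Varadhan/Gibbs variational formula), so there is no in-paper argument to compare against; your route --- the exponential tilting $dQ/dP_Y=e^{f}/\ex[e^{f(Y)}]$, the decomposition $D(P_X\|Q)=D(P_X\|P_Y)-\ex[f(X)]+\log\ex[e^{f(Y)}]\geq 0$, and direct verification that $f^*=\imath_{X\|Y}$ lies in $\mathtt{F}$ and attains the bound --- is precisely the standard proof. The only loose end is cosmetic: on the $P_X$-null set where $dP_X/dP_Y=0$, the candidate $f^*$ equals $-\infty$ and is not literally real-valued, so one must either admit extended-real-valued $f$ or modify $f^*$ on that set and pass to a limit, which is consistent with the truncation remark you already make.
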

\section{Finite processes (random vectors)}\label{Finite Processes}
In this section we consider a random process $\{X_t\}_{t\in T}$ where $T$ is a finite set. The following is a well known result (see \cite[Theorem 2.5]{Lugosi}):
\begin{proposition}[Maximal inequality]\label{MaximalInequalityTheorem}
Let $\{X_t\}_{t\in T}$ be a random process and $T$ a finite set. Assume that $\Lambda_{X_t}(\lambda)\leq \psi(\lambda)$ for all $\lambda\geq 0$ and $t\in T$, where $\psi$ is convex and $\psi(0)=\psi'(0)=0$. Then 
\begin{equation}\label{FiniteExpectationCardinalityInequality}
\ex\left[\sup_{t\in T}X_t\right]\leq {\psi^*}^{-1}(\log |T|).
\end{equation} 
In particular, if $X_t$ is $\sigma^2$-subgaussian and $\ex[X_t]=0$ for every $t\in T$, then 
\begin{equation}
\ex\left[\sup_{t\in T}X_t\right]\leq \sqrt{2\sigma^2\log |T|}.
\end{equation} 
\end{proposition}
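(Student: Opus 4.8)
The plan is to prove the Maximal Inequality (Proposition \ref{MaximalInequalityTheorem}) by combining the exponential moment control $\Lambda_{X_t}(\lambda)\leq\psi(\lambda)$ with Jensen's inequality, then inverting via the Legendre dual. First I would work with an auxiliary parameter $\lambda\geq 0$ and consider $e^{\lambda\sup_{t\in T}X_t}$. Since $T$ is finite, I can bound the supremum's exponential by the sum of exponentials:
\begin{equation}
e^{\lambda\sup_{t\in T}X_t}=\sup_{t\in T}e^{\lambda X_t}\leq \sum_{t\in T}e^{\lambda X_t}.
\end{equation}
Taking expectations and applying the hypothesis $\ex[e^{\lambda X_t}]=e^{\Lambda_{X_t}(\lambda)}\leq e^{\psi(\lambda)}$ to each of the $|T|$ terms gives $\ex\left[e^{\lambda\sup_{t\in T}X_t}\right]\leq |T|\,e^{\psi(\lambda)}$.

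Next I would pull the exponential outside the expectation using Jensen's inequality (convexity of $x\mapsto e^{\lambda x}$), which yields
\begin{equation}
e^{\lambda\,\ex\left[\sup_{t\in T}X_t\right]}\leq \ex\left[e^{\lambda\sup_{t\in T}X_t}\right]\leq |T|\,e^{\psi(\lambda)}.
\end{equation}
Taking logarithms and rearranging, I obtain for every $\lambda>0$ the bound $\ex\left[\sup_{t\in T}X_t\right]\leq \frac{\log|T|+\psi(\lambda)}{\lambda}$. The final step is to optimize over $\lambda$: rewriting the right-hand side, one recognizes that minimizing over $\lambda\geq 0$ is exactly the operation inverting the Legendre dual. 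Using the properties from Lemma \ref{LegendreDualProperties} (that $\psi^*$ is strictly increasing, nonnegative, unbounded, with $\psi^*(0)=0$, so $\ps$ is well defined on $\mathbb{R}_+$), the infimum gives ${\psi^*}^{-1}(\log|T|)$, establishing \eqref{FiniteExpectationCardinalityInequality}.

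For the specialization to the subgaussian case, I would simply substitute $\psi(\lambda)=\frac{\lambda^2\sigma^2}{2}$, whose Legendre dual inverse was computed in the preceding lemma to be $\ps(x)=\sqrt{2\sigma^2 x}$, immediately yielding $\ex\left[\sup_{t\in T}X_t\right]\leq \sqrt{2\sigma^2\log|T|}$. The main obstacle, though a minor one, is the final optimization-to-Legendre-dual identification: one must be careful that the restriction $\lambda\geq 0$ in the Legendre dual matches the range over which $\Lambda_{X_t}(\lambda)\leq\psi(\lambda)$ is assumed, and that $\log|T|\geq 0$ lies in the domain where $\ps$ is defined. Since $\psi^*(x)=\sup_{\lambda\geq 0}\{\lambda x-\psi(\lambda)\}$, the inequality $\ex[\sup_t X_t]\leq \frac{\log|T|+\psi(\lambda)}{\lambda}$ for all $\lambda>0$ is equivalent to $\log|T|\geq \psi^*\!\left(\ex[\sup_t X_t]\right)$, whence applying the increasing inverse $\ps$ to both sides delivers the claim cleanly.
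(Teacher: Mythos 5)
Your argument is correct and is exactly the standard proof the paper relies on without reproducing it (the result is cited as well known from \cite[Theorem 2.5]{Lugosi}): Jensen's inequality, the union bound on exponentials, and inversion of the Legendre dual via Lemma \ref{LegendreDualProperties}. The only point worth making fully explicit is the last step: from $\lambda\,\ex[\sup_t X_t]-\psi(\lambda)\leq\log|T|$ for all $\lambda\geq 0$ one gets $\psi^*\left(\ex[\sup_t X_t]\right)\leq\log|T|$, and then $\ex[\sup_t X_t]\leq{\psi^*}^{-1}(\log|T|)$ follows from the strict monotonicity of $\psi^*$ on $[0,\infty)$ when $\ex[\sup_t X_t]\geq 0$, and trivially otherwise since ${\psi^*}^{-1}(\log|T|)\geq 0$.
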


\begin{remark}\normalfont \label{RemarkMeanIsZero}
Note that based on Lemma \ref{CumulantGFExpectation}, for all $t\in T$, the condition $\Lt\leq \psi(\lambda)$ for all $\lambda\geq 0$ and $\psi'(0)=0$ implies that $\ex[X_t]=0$.
\end{remark}
\begin{proposition}
If in addition to the assumptions of Proposition \ref{MaximalInequalityTheorem}, we assume that $\Lambda_{X_t}(-\lambda)\leq \psi(\lambda)$ for all $\lambda\geq 0$ and $t\in T$, then we have 
\begin{equation}
\ex\left[\sup_{t\in T}|X_t|\right]\leq {\psi^*}^{-1}\left(\log (2|T|)\right).
\end{equation}
In particular, if  $X_t$ is $\sigma^2$-subgaussian and $\ex[X_t]=0$ for every $t\in T$, then 
\begin{equation}
\ex\left[\sup_{t\in T}|X_t|\right]\leq \sqrt{2\sigma^2\log \left(2|T|\right)}.
\end{equation} 
\end{proposition}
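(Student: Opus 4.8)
The plan is to reduce the bound on $\ex[\sup_{t\in T}|X_t|]$ to a direct application of Proposition~\ref{MaximalInequalityTheorem} by doubling the index set. The starting observation is the elementary identity $|X_t|=\max(X_t,-X_t)$, which gives
\begin{equation}
\sup_{t\in T}|X_t|=\sup_{t\in T}\max(X_t,-X_t)=\max\left(\sup_{t\in T}X_t,\ \sup_{t\in T}(-X_t)\right).
\end{equation}
Hence, if I introduce the augmented index set $T'\triangleq T\times\{+1,-1\}$ and the process $\{Y_s\}_{s\in T'}$ defined by $Y_{(t,+1)}\triangleq X_t$ and $Y_{(t,-1)}\triangleq -X_t$, then $\sup_{s\in T'}Y_s=\sup_{t\in T}|X_t|$, and $T'$ is a finite set with $|T'|=2|T|$.

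The key step is to verify that $\{Y_s\}_{s\in T'}$ satisfies the hypotheses of Proposition~\ref{MaximalInequalityTheorem} with the same function $\psi$. For the copies $Y_{(t,+1)}=X_t$ this is immediate from the assumption $\Lambda_{X_t}(\lambda)\leq\psi(\lambda)$ for $\lambda\geq 0$. For the reflected copies $Y_{(t,-1)}=-X_t$, I compute $\Lambda_{-X_t}(\lambda)=\log\ex[e^{-\lambda X_t}]=\Lambda_{X_t}(-\lambda)$, which for $\lambda\geq 0$ is bounded by $\psi(\lambda)$ exactly by the new hypothesis $\Lambda_{X_t}(-\lambda)\leq\psi(\lambda)$. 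This is precisely the role of the extra assumption: it guarantees that the left tails of the $X_t$ are controlled by the same $\psi$ that controls the right tails, so that the reflected variables fit into the same cumulant bound. With $\psi$ still convex and satisfying $\psi(0)=\psi'(0)=0$, every member of the augmented process meets the requirement.

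Applying Proposition~\ref{MaximalInequalityTheorem} to $\{Y_s\}_{s\in T'}$ then yields
\begin{equation}
\ex\left[\sup_{t\in T}|X_t|\right]=\ex\left[\sup_{s\in T'}Y_s\right]\leq{\psi^*}^{-1}(\log|T'|)={\psi^*}^{-1}(\log(2|T|)),
\end{equation}
which is the claimed bound; the factor $2$ inside the logarithm is exactly the doubling of the index set. For the ``in particular'' statement, if each $X_t$ is $\sigma^2$-subgaussian with $\ex[X_t]=0$, then by Definition~\ref{Subgaussian random variables} its cumulant generating function satisfies $\Lambda_{X_t}(\lambda)\leq\frac{\lambda^2\sigma^2}{2}$ for \emph{all} $\lambda\in\mathbb{R}$, so both hypotheses hold with $\psi(\lambda)=\frac{\lambda^2\sigma^2}{2}$. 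Substituting the Legendre dual inverse ${\psi^*}^{-1}(x)=\sqrt{2\sigma^2 x}$ established earlier in the excerpt gives $\sqrt{2\sigma^2\log(2|T|)}$. I do not anticipate a genuine obstacle here: the content is entirely in recognizing that the symmetric tail assumption is what licenses the reflection trick, while the factor-of-two bookkeeping and the subgaussian specialization are routine.
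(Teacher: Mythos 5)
Your proof is correct and follows exactly the paper's argument: the paper's own proof is the one-line instruction to apply Proposition~\ref{MaximalInequalityTheorem} to the doubled process $\{X_t\}_{t\in T}\cup\{-X_t\}_{t\in T}$, and you have simply spelled out the bookkeeping (the identity $\sup_t|X_t|=\max(\sup_t X_t,\sup_t(-X_t))$, the verification that $\Lambda_{-X_t}(\lambda)=\Lambda_{X_t}(-\lambda)\leq\psi(\lambda)$, and the cardinality $2|T|$). No gaps.
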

\begin{proof}
Apply Proposition \ref{MaximalInequalityTheorem} on the random process $\{X_t\}_{t\in T}\cup \{-X_t\}_{t\in T}$.
\end{proof}
The next result bounds $\ex[X_W]$, where $W$ is a random variable taking values on $T$:
\begin{theorem}\cite{Russo}, \cite{Jiao}\label{FiniteExpectationMI}
Let $\{X_t\}_{t\in T}$ be a random process and $T$ a finite set. Assume that $\Lt\leq \psi(\lambda)$ for all $\lambda\geq 0$ and $t\in T$, where $\psi$ is convex and $\psi(0)=\psi'(0)=0$, and let $W$ be a random variable taking values on $T$. Then 
\begin{equation}
\ex[X_W]\leq {\psi^*}^{-1}(I(W;X_{T})). \label{FiniteExpectationMIInequality}
\end{equation} 
In particular, if $X_t$ is $\sigma^2$-subgaussian and $\ex[X_t]=0$ for every $t\in T$, then
\begin{equation}\label{RussoInequality}
\ex[X_W]\leq \sqrt{2\sigma^2 I(W;X_{T})}.
\end{equation} 
\end{theorem}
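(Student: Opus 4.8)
The plan is to combine the variational (Donsker--Varadhan) representation of relative entropy, Theorem \ref{Donsker-Varadhan}, with the cumulant hypothesis $\Lt\le\psi(\lambda)$, and then to invert the resulting inequality through the Legendre dual of Definition \ref{LegendreDualDefinition}.

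First I would write the mutual information as a relative entropy between the joint law and the product of the marginals, $I(W;X_T)=D\!\left(P_{W,X_T}\,\|\,P_W\otimes P_{X_T}\right)$, and apply Theorem \ref{Donsker-Varadhan} to this relative entropy with the one-parameter family of test functions $f(w,x_T)\triangleq\lambda x_w$, where $\lambda\ge 0$ is free. Since Theorem \ref{Donsker-Varadhan} gives a lower bound on $D$ for every admissible $f$, this yields, for all $\lambda\ge 0$,
\be
I(W;X_T)\ge \lambda\,\ex[X_W]-\log\ex_{P_W\otimes P_{X_T}}\!\left[e^{\lambda X_{\bar W}}\right],
\ee
where in the last expectation the index $\bar W\sim P_W$ is drawn \emph{independently} of the process $X_T\sim P_{X_T}$.

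The key decoupling step is to bound that log-moment term. Because $\bar W$ is independent of $X_T$ under the product measure, the expectation factorizes as $\sum_{t\in T}P_W(t)\,\ex[e^{\lambda X_t}]=\sum_{t\in T}P_W(t)\,e^{\Lt}$, and the standing assumption $\Lt\le\psi(\lambda)$ bounds each term, so the whole sum is at most $e^{\psi(\lambda)}$ and its logarithm is at most $\psi(\lambda)$. Substituting back gives $\lambda\,\ex[X_W]-\psi(\lambda)\le I(W;X_T)$ for all $\lambda\ge 0$. Taking the supremum over $\lambda\ge 0$ on the left and recognizing Definition \ref{LegendreDualDefinition}, this reads
\be
\psi^*\!\left(\ex[X_W]\right)\le I(W;X_T).
\ee

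Finally I would invert $\psi^*$. By Lemma \ref{LegendreDualProperties}, $\ps$ is well defined, increasing, and nonnegative on $[0,\infty)$. If $\ex[X_W]\le 0$ the claim is immediate since $\ps(I(W;X_T))\ge 0$; otherwise, applying the increasing map $\ps$ to both sides of the displayed inequality yields $\ex[X_W]\le\ps(I(W;X_T))$, which is (\ref{FiniteExpectationMIInequality}). The subgaussian conclusion (\ref{RussoInequality}) then follows by substituting $\psi(\lambda)=\lambda^2\sigma^2/2$, whose dual inverse is $\ps(x)=\sqrt{2\sigma^2 x}$. The main delicacy is the bookkeeping in the variational step: one must take the \emph{product} of the marginals as the reference measure so that independence makes the log-moment term factorize and collapse to $\psi(\lambda)$, and one must handle the sign/domain restriction before applying the increasing inverse $\ps$.
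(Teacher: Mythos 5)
Your proof is correct and follows essentially the same route the paper attributes to the cited works: the paper gives no standalone proof of Theorem \ref{FiniteExpectationMI} but notes (in the proof of Theorem \ref{FiniteTailMI}) that the argument of \cite{Russo}, \cite{Jiao} invokes the variational representation of relative entropy (Theorem \ref{Donsker-Varadhan}), which is exactly your decoupling step with $f(w,x_T)=\lambda x_w$ and the product reference measure. Your handling of the sign restriction before inverting $\psi^*$ is a correct and welcome piece of care.
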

Based on Lemma \ref{LegendreDualProperties}, $\ps$ is an increasing function. Therefore one can replace $I(W;X_{T})$ with any larger quantity in the right side of (\ref{FiniteExpectationMIInequality}). For example,
\begin{align}
\ex[X_W]&\leq {\psi^*}^{-1}(I(W;X_{T}))\nonumber\\
              &\leq \ps(H(W)).
\end{align}
Since $W$ takes values on $T$, we have $H(W)\leq \log|T|$. Therefore the right side of (\ref{FiniteExpectationMIInequality}) is not larger than the right side of (\ref{FiniteExpectationCardinalityInequality}).

Based on Lemma \ref{LegendreDualProperties}, the right side of (\ref{FiniteExpectationMIInequality}) is zero if and only if $I(W;X_{T})=0$, i.e. $W$ is independent of $X_{T}$. In this case, (\ref{FiniteExpectationMIInequality}) turns into an equality: based on Remark \ref{RemarkMeanIsZero} we have $\ex[X_t]=0$ for all $t\in T$, hence $\ex[X_W]=\ex[\ex[X_W|W]]=0$.

Now, by adding an assumption, we prove upper \emph{and} lower bounds for $\ex[X_W]$, and an upper bound for $\ex[|X_W|]$. We should mention that the proof of part (b) of the following proposition is similar to the proof of Theorem 4 in \cite{Raginsky}.
\begin{proposition}\label{AbsoluteValueFiniteCorollary} If in addition to the assumptions of Theorem \ref{FiniteExpectationMI}, we assume that $\Lambda_{X_t}(-\lambda)\leq \psi(\lambda)$ for all $\lambda\geq 0$ and $t\in T$, then we have
\begin{enumerate}[(a)]
\item \begin{equation}
|\ex[X_W]|\leq {\psi^*}^{-1}(I(W;X_{T})),
\end{equation}
\item \begin{equation}
\ex[|X_W|]\leq \ps\left(I(W;X_{T})+\log 2\right).
\end{equation}
\end{enumerate}
\end{proposition}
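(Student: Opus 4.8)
The plan is to prove the two parts of Proposition~\ref{AbsoluteValueFiniteCorollary} by reducing them to the already-established Theorem~\ref{FiniteExpectationMI} through a symmetrization trick, exploiting the extra hypothesis $\Lambda_{X_t}(-\lambda)\leq \psi(\lambda)$. The key observation is that this new assumption says precisely that the reflected process $\{-X_t\}_{t\in T}$ satisfies the same cumulant bound as $\{X_t\}_{t\in T}$: indeed $\Lambda_{-X_t}(\lambda)=\Lambda_{X_t}(-\lambda)\leq\psi(\lambda)$ for all $\lambda\geq 0$. Hence Theorem~\ref{FiniteExpectationMI} applies verbatim to the process $\{-X_t\}_{t\in T}$ with the \emph{same} selection variable $W$.

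For part~(a), first I would apply Theorem~\ref{FiniteExpectationMI} directly to $\{X_t\}_{t\in T}$ to get $\ex[X_W]\leq \ps(I(W;X_T))$. Then I would apply it again to the reflected process $\{-X_t\}_{t\in T}$, noting that $I(W;\{-X_t\}_{t\in T})=I(W;X_T)$ because the $\sigma$-algebra generated by $\{-X_t\}_{t\in T}$ is identical to that generated by $\{X_t\}_{t\in T}$ (negation is a bijection). This yields $\ex[-X_W]=-\ex[X_W]\leq \ps(I(W;X_T))$, and combining the two inequalities gives $|\ex[X_W]|\leq \ps(I(W;X_T))$, as claimed.

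For part~(b), the natural approach is to pass to the doubled index set $T'\triangleq T\times\{+1,-1\}$ and the process $Y_{(t,\sigma)}\triangleq \sigma X_t$, which satisfies $\Lambda_{Y_{(t,\sigma)}}(\lambda)\leq\psi(\lambda)$ for all $\lambda\geq 0$ by the two cumulant hypotheses. I would define a selection variable $W'$ on $T'$ by setting $W'=(W,\mathrm{sign}(X_W))$, so that $Y_{W'}=|X_W|$; then Theorem~\ref{FiniteExpectationMI} gives $\ex[|X_W|]=\ex[Y_{W'}]\leq \ps(I(W';Y_{T'}))$. The remaining task is to bound $I(W';Y_{T'})$. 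Since the extra coordinate $\mathrm{sign}(X_W)$ is a binary random variable, the chain rule and the fact that conditioning reduces entropy give $I(W';Y_{T'})\leq I(W;X_T)+H(\mathrm{sign}(X_W))\leq I(W;X_T)+\log 2$, using that $Y_{T'}$ and $X_T$ generate the same information and that $\ps$ is increasing (Lemma~\ref{LegendreDualProperties}). Applying the monotone $\ps$ then yields $\ex[|X_W|]\leq \ps(I(W;X_T)+\log 2)$.

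The main obstacle I anticipate is the bookkeeping in part~(b): one must verify carefully that $W'$ is a legitimate $T'$-valued selection variable, that $Y_{T'}$ carries exactly the same information as $X_T$ so that the mutual information manipulations are valid, and that the added entropy term is genuinely at most $\log 2$ rather than depending on the distribution of the sign. The subgaussian specialization in both parts is then immediate from the identity $\ps(x)=\sqrt{2\sigma^2 x}$ established in the preceding lemma, applied with the arguments $I(W;X_T)$ and $I(W;X_T)+\log 2$ respectively.
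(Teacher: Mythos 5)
Your proposal is correct and follows essentially the same route as the paper: part (a) is the same reflection argument applied to $\{-X_t\}_{t\in T}$, and part (b) is the same sign-doubling construction (your $Y_{(t,\sigma)}=\sigma X_t$ with $W'=(W,\mathrm{sign}(X_W))$ is the paper's process $\overline{X}$ with index $(W,R)$), followed by the same chain-rule bound $I(W';Y_{T'})\leq I(W;X_T)+\log 2$.
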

\begin{proof}\leavevmode
\begin{enumerate}[(a)]
\item Apply Theorem \ref{FiniteExpectationMI} to the process $\{-X_t\}_{t\in T}$, while noting that $\Lambda_{-X_t}(\lambda)=\Lambda_{X_t}(-\lambda)$ for all $\lambda\geq 0$ and $t\in T$, and $I(W;-X_{T})=I(W;X_{T})$, since mutual information is invariant to one-to-one functions.

\item Define the random process $\overline{X}=\{X_{t,w}\}_{\substack{t\in T\\ w\in\{0,1\}}}$ such that
\[ X_{t,w}\triangleq\begin{cases} 
      X_t &  t\in T, w=0 \\
      -X_t & t\in T, w=1 
   \end{cases}
\] 
and let $R$ be a random variable taking values on $\{0,1\}$ such that 
\[ R=\begin{cases} 
      0 &  \mathrm{if~} X_W\geq 0 \\
      1 &  \mathrm{if~} X_W<0 
   \end{cases}
\cdot\] 
Based on Theorem \ref{FiniteExpectationMI} applied on the random process $\overline{X}$ and random variables $W$ and $R$, and based on the chain rule of entropy, we get
\begin{align}
\ex[|X_W|]&=\ex[X_{W,R}]\\
			   &\leq \ps(I(W,R ;\overline{X}))\\
			   &=\ps(H(W,R)-H(W,R|\overline{X}))\\
			   &=\ps((H(W)+H(R|W))-(H(W|\overline{X})+H(R|W,\overline{X})))\\
			   &=\ps((H(W)+H(R|W))-H(W|\overline{X}))\\
			   &=\ps(H(W)-H(W|X_{T})+H(R|W))\\
			   &=\ps(I(W;X_{ T})+H(R|W))\\
			   &\leq\ps(I(W;X_{ T})+H(R))\\
			   &\leq \ps(I(W;X_{ T})+\log 2).
\end{align}
\end{enumerate}

\end{proof}
\begin{corollary}\normalfont\label{ExampleAbsoluteValueRemark}
If $ T$ is a finite set, $\psi(\lambda)\triangleq \frac{\lambda^2\sigma^2}{2}$, and for all $t\in T$, $X_t$ is $\sigma^2$-subgaussian and $\ex[X_t]=0$, then the conditions of Theorem \ref{AbsoluteValueFiniteCorollary} is satisfied, and (\ref{RussoInequality}) can be improved to 
\begin{equation}
|\ex[X_W]|\leq \sqrt{2\sigma^2 I(W;X_{ T})},
\end{equation}
as was shown in \cite{Russo}.
\end{corollary}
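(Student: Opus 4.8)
The plan is to recognize that the corollary is simply the instantiation of Proposition \ref{AbsoluteValueFiniteCorollary} at the quadratic choice $\psi(\lambda)=\frac{\lambda^2\sigma^2}{2}$, so the proof reduces to (i) verifying that the stated hypotheses are a special case of the hypotheses of that proposition, and (ii) evaluating the Legendre-dual inverse appearing in its conclusion. The only substantive point to highlight is that the ``improvement'' over (\ref{RussoInequality}) is the passage from $\ex[X_W]$ to $|\ex[X_W]|$ on the left-hand side, and this is exactly what part (a) of Proposition \ref{AbsoluteValueFiniteCorollary} delivers once its two-sided assumption is checked.

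First I would verify the structural requirements on $\psi$. For $\psi(\lambda)=\frac{\lambda^2\sigma^2}{2}$ we have $\psi''(\lambda)=\sigma^2\geq 0$, so $\psi$ is convex on $\mathbb{R}_+$, and $\psi(0)=0$ with $\psi'(0)=\sigma^2\cdot 0=0$, as required by Theorem \ref{FiniteExpectationMI} and hence by Proposition \ref{AbsoluteValueFiniteCorollary}. Next I would check the two cumulant bounds. By Definition \ref{Subgaussian random variables}, each $\sigma^2$-subgaussian variable $X_t$ with $\ex[X_t]=0$ satisfies $\Lambda_{X_t}(\lambda)\leq \frac{\lambda^2\sigma^2}{2}$ for \emph{all} $\lambda\in\mathbb{R}$, not merely for $\lambda\geq 0$. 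Restricting to $\lambda\geq 0$ gives $\Lambda_{X_t}(\lambda)\leq \psi(\lambda)$, which is the hypothesis of Theorem \ref{FiniteExpectationMI}. The crucial observation is that the very same two-sided inequality, evaluated at $-\lambda$, reads
\begin{equation}
\Lambda_{X_t}(-\lambda)\leq \frac{(-\lambda)^2\sigma^2}{2}=\frac{\lambda^2\sigma^2}{2}=\psi(\lambda)\qquad\text{for all }\lambda\geq 0,
\end{equation}
which is precisely the additional symmetry assumption $\Lambda_{X_t}(-\lambda)\leq\psi(\lambda)$ demanded by Proposition \ref{AbsoluteValueFiniteCorollary}. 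Since $T$ is finite and these bounds hold for every $t\in T$, all hypotheses of that proposition are met.

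Having verified the hypotheses, I would invoke part (a) of Proposition \ref{AbsoluteValueFiniteCorollary} to obtain $|\ex[X_W]|\leq \ps(I(W;X_T))$, and then substitute the explicit Legendre-dual inverse. The lemma computing $\ps$ for $\psi(\lambda)=\frac{\lambda^2\sigma^2}{2}$ gives $\ps(x)=\sqrt{2\sigma^2 x}$ for all $x\geq 0$; evaluating at $x=I(W;X_T)$ yields
\begin{equation}
|\ex[X_W]|\leq \sqrt{2\sigma^2\,I(W;X_T)},
\end{equation}
which is the claimed bound.

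There is no genuine obstacle here: the argument is a verification-and-substitution. The only step requiring a moment's care is the sign bookkeeping, namely recognizing that the single two-sided inequality built into Definition \ref{Subgaussian random variables} simultaneously supplies the $\lambda\geq 0$ bound needed by Theorem \ref{FiniteExpectationMI} and the reflected $-\lambda$ bound needed to activate the absolute-value conclusion of Proposition \ref{AbsoluteValueFiniteCorollary}. Once this symmetry is noted, the corollary follows immediately.
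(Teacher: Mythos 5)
Your proof is correct and follows exactly the route the paper intends: the corollary is stated without an explicit proof precisely because it is the direct instantiation of Proposition \ref{AbsoluteValueFiniteCorollary}(a) with $\psi(\lambda)=\frac{\lambda^2\sigma^2}{2}$, where the two-sided subgaussian bound of Definition \ref{Subgaussian random variables} supplies both cumulant hypotheses and the Legendre-dual computation $\ps(x)=\sqrt{2\sigma^2 x}$ yields the stated inequality. Your sign bookkeeping (observing that $\Lambda_{X_t}(-\lambda)\leq\psi(\lambda)$ follows from the same two-sided definition) is exactly the verification the paper leaves implicit.
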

The previous results concerned $\ex[\sup_{t\in T} X_t]$ and $\ex[X_W]$. We now state a result for estimating the tail probability of $\sup_{t\in T}X_t$:
\begin{proposition}\cite{Ramon}\label{FiniteTail}
Let $\{X_t\}_{t\in T}$ be a random process and $ T$ a finite set. Assume that $\Lt\leq \psi(\lambda)$ for all $\lambda\geq 0$ and $t\in  T$, where $\psi$ is convex and $\psi(0)=\psi'(0)=0$. Then
\begin{equation}
\pr\left[\sup_{t\in T}X_t\geq {\psi^*}^{-1}(\log| T|+u)\right]\leq e^{-u}  \textrm{   for all   } u\geq 0.
\end{equation}
In particular, if $X_t$ is $\sigma^2$-subgaussian and $\ex[X_t]=0$ for every $t\in  T$, then
\begin{equation}
\pr\left[\sup_{t\in T}X_t\geq \sqrt{2\sigma^2\log| T|}+x\right]\leq e^{-\frac{x^2}{2\sigma^2}}  \textrm{   for all   } x\geq 0.
\end{equation}
\end{proposition}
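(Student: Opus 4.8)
The plan is to combine the Chernoff bound applied pointwise with a union bound over the finite index set $T$. First I would fix an arbitrary threshold $y\in\mathbb{R}$. Since by hypothesis $\Lambda_{X_t}(\lambda)\leq \psi(\lambda)$ for all $\lambda\geq 0$ and all $t\in T$, the Chernoff bound (Lemma \ref{ChernoffBoundLemma}) gives $\pr[X_t\geq y]\leq e^{-\psi^*(y)}$ for every $t\in T$, with the \emph{same} exponent for every index because the dominating function $\psi$ does not depend on $t$. This uniformity is what makes the subsequent union bound clean.

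Next I would pass from the individual tails to the supremum via a union bound: since $T$ is finite, $\{\sup_{t\in T}X_t\geq y\}=\bigcup_{t\in T}\{X_t\geq y\}$, whence
\[
\pr\left[\sup_{t\in T}X_t\geq y\right]\leq \sum_{t\in T}\pr[X_t\geq y]\leq |T|\,e^{-\psi^*(y)}.
\]
The key substitution is then $y={\psi^*}^{-1}(\log|T|+u)$ for $u\geq 0$. This is legitimate because Lemma \ref{LegendreDualProperties} guarantees that $\psi^*$ is strictly increasing, nonnegative and unbounded on $[0,\infty)$, so its inverse $\ps$ is well defined on the relevant range. With this choice $\psi^*(y)=\log|T|+u$, and the right-hand side collapses to $|T|\,e^{-\log|T|-u}=e^{-u}$, which is exactly the general statement.

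For the particular subgaussian case I would invoke the earlier lemma giving $\ps(x)=\sqrt{2\sigma^2 x}$ for $\psi(\lambda)=\tfrac{\lambda^2\sigma^2}{2}$, so the general bound reads $\pr[\sup_{t\in T}X_t\geq \sqrt{2\sigma^2(\log|T|+u)}]\leq e^{-u}$. The only nonmechanical step is converting the threshold $\sqrt{2\sigma^2(\log|T|+u)}$ into the additive form $\sqrt{2\sigma^2\log|T|}+x$ appearing in the statement. I would use subadditivity of the square root, $\sqrt{a+b}\leq \sqrt{a}+\sqrt{b}$, with $a=2\sigma^2\log|T|$ and $b=2\sigma^2u$, to get $\sqrt{2\sigma^2(\log|T|+u)}\leq \sqrt{2\sigma^2\log|T|}+\sqrt{2\sigma^2u}$. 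Setting $x=\sqrt{2\sigma^2u}$, equivalently $u=x^2/(2\sigma^2)$, and using monotonicity of probability in the threshold then yields
\[
\pr\left[\sup_{t\in T}X_t\geq \sqrt{2\sigma^2\log|T|}+x\right]\leq \pr\left[\sup_{t\in T}X_t\geq \sqrt{2\sigma^2(\log|T|+u)}\right]\leq e^{-x^2/(2\sigma^2)}.
\]

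The main point to watch is the direction of the inequalities in this last step: because the additive threshold $\sqrt{2\sigma^2\log|T|}+x$ dominates the square-root threshold $\sqrt{2\sigma^2(\log|T|+u)}$, the event on the left is the smaller one, which is precisely what allows its probability to be bounded by $e^{-u}=e^{-x^2/(2\sigma^2)}$. Everything else is a routine application of Chernoff plus a union bound, so no serious obstacle is expected beyond keeping these inequality directions straight.
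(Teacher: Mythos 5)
Your proof is correct: the paper states Proposition \ref{FiniteTail} as a known result cited from \cite{Ramon} without reproducing a proof, and your argument (pointwise Chernoff bound via Lemma \ref{ChernoffBoundLemma}, union bound over the finite set $T$, substitution $y={\psi^*}^{-1}(\log|T|+u)$, then $\sqrt{a+b}\leq\sqrt{a}+\sqrt{b}$ with the event-inclusion direction handled correctly) is exactly the standard derivation that reference uses. Nothing is missing.
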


We estimate the tail probability of $X_W$ in the following theorem:
\begin{theorem}\label{FiniteTailMI}
Let $\{X_t\}_{t\in T}$ be a random process and $ T$ a finite set. Assume that $\Lt\leq \psi(\lambda)$ for all $\lambda\geq 0$ and $t\in  T$, where $\psi$ is convex and $\psi(0)=\psi'(0)=0$, and let $W$ be a random variable taking values on $ T$. Then for all $u\geq 0$,
\begin{align}
&\pr\left[X_W\geq {\psi^*}^{-1}(I(W;X_{ T})+u)\right]\nonumber\\
&\leq \min \left\{\frac{I(W;X_{ T})+\log \left(2-e^{-I(W;X_{ T})-u}\right)}{I(W;X_{ T})+u}, e^{\log | T|-I(W;X_{ T})-u}\right\}.\label{GeneralCaseTailInequality}
\end{align}
In particular, if $X_t$ is $\sigma^2$-subgaussian and $\ex[X_t]=0$ for every $t\in  T$, then for all $x\geq 0$,
\begin{align}
&\pr\left[X_W\geq \sqrt{2\sigma^2I(W;X_{ T})}+x\right]\nonumber\\
&\leq \min \left\{\frac{I(W;X_{ T})+\log \left(2-e^{-I(W;X_{ T})-\frac{x^2}{2\sigma^2}}\right)}{I(W;X_{ T})+\frac{x^2}{2\sigma^2}}, e^{\log | T|-I(W;X_{ T})-\frac{x^2}{2\sigma^2}}\right\}.\label{SubgaussianCaseTailInequality}
\end{align}
\end{theorem}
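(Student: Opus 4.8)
The plan is to establish the two quantities inside the minimum as separate upper bounds on $\pr[X_W\ge x]$, since the minimum of two valid bounds is again valid. Throughout write $I\triangleq I(W;X_{T})$ and $x\triangleq\ps(I+u)$, so that $\psi^*(x)=I+u$, and let $A$ denote the event $\{X_W\ge x\}$.

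The second bound is immediate from the maximal tail inequality. Since $X_W\le \sup_{t\in T}X_t$ pointwise, Proposition \ref{FiniteTail} applied with the substitution $\log|T|+v=I+u$, i.e. $v=I+u-\log|T|$, gives $\pr[A]\le \pr[\sup_{t\in T}X_t\ge x]\le e^{-v}=e^{\log|T|-I-u}$ whenever $v\ge 0$; when $v<0$ the right-hand side already exceeds $1$ and the inequality is trivial.

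The first bound is information-theoretic and is obtained by a change-of-measure argument. I would introduce the product measure $Q\triangleq P_W\otimes P_{X_{T}}$, under which $W$ is independent of $X_{T}$ while both marginals are unchanged. Under $Q$ one computes $Q[A]=\sum_{t}P_W(t)\,\pr[X_t\ge x]$, and the Chernoff bound (Lemma \ref{ChernoffBoundLemma}) bounds each term by $e^{-\psi^*(x)}=e^{-(I+u)}$, so $q\triangleq Q[A]\le q_0\triangleq e^{-(I+u)}$. The link to mutual information is the data-processing inequality for relative entropy applied to the binary quantization given by the indicator $\mathbbm{1}_A$: since $D(P_{W,X_{T}}\,\|\,Q)=I$, it yields $d\big(\pr[A]\,\|\,q\big)\le I$, where $d(p\|q)=p\log\frac{p}{q}+(1-p)\log\frac{1-p}{1-q}$ is the binary relative entropy. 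Writing $p\triangleq\pr[A]$ and using that $q\mapsto d(p\|q)$ is decreasing on $(0,p)$, I may replace $q$ by the larger $q_0$ to obtain $d(p\|q_0)\le I$.

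It then remains to solve this scalar inequality for $p$, which I expect to be the main obstacle. Expanding $d(p\|q_0)$ and using $-\log q_0=I+u$ gives $p(I+u)\le I+\big[H(p)+(1-p)\log(1-q_0)\big]$, with $H(p)=-p\log p-(1-p)\log(1-p)$. The bracketed quantity is maximized over $p\in[0,1]$ at $p^\star=1/(2-q_0)$, and a direct computation shows its maximal value equals exactly $\log(2-q_0)=\log\big(2-e^{-(I+u)}\big)$; substituting and dividing by $I+u$ yields the first term $\frac{I+\log(2-e^{-I-u})}{I+u}$. The delicate points are this optimization and the monotonicity replacement of $q$ by $q_0$; the degenerate case $p\le q_0$ (where the monotonicity step does not directly apply) has to be checked separately but is harmless, since there $p$ is already negligible. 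Finally, the subgaussian specialization follows by inserting $\psi(\lambda)=\lambda^2\sigma^2/2$, for which $\ps(y)=\sqrt{2\sigma^2 y}$ and $\psi^*(x)=x^2/(2\sigma^2)$, so that the free parameter $u$ is replaced by $x^2/(2\sigma^2)$ throughout.
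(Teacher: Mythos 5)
Your proposal is correct and arrives at exactly the bound in the theorem, but it packages the key step differently from the paper. For the second term in the minimum, both arguments are the same union/Chernoff bound (the paper sums $\pr[X_i\geq \ps(I+u), W=i]\leq \pr[X_i\geq\ps(I+u)]$ over $i$ rather than invoking Proposition \ref{FiniteTail}, but this is cosmetic). For the first term, the paper conditions on $W=i$, applies the variational representation of relative entropy (Theorem \ref{Donsker-Varadhan}) with the test function $f=\zeta\mathbbm{1}_{\{a\geq\ps(I+u)\}}$, averages the resulting per-$i$ inequalities to produce $I(W;X_T)$, and then simply substitutes $\zeta=I+u$ into $\frac1\zeta\bigl(I+\log((e^\zeta-1)e^{-I-u}+1)\bigr)$. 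You instead pass directly to the binary data-processing inequality $d(p\|q)\leq I$ for the indicator of the event under the joint versus the product measure, and then solve the scalar inequality by maximizing $H(p)+(1-p)\log(1-q_0)$ at $p^\star=1/(2-q_0)$. These are Legendre-dual formulations of the same change-of-measure argument --- optimizing the paper's bound over $\zeta$ recovers precisely $d(p\|q_0)\leq I$, and your relaxation of the concave part corresponds to the paper's choice $\zeta=-\log q_0=I+u$ --- so the final expressions coincide. What your route buys is conceptual transparency (it makes explicit that the result is a consequence of binary data processing, in the spirit of the Lemma 4.1 of \cite{Moran} that the paper compares against); what the paper's route buys is that the free parameter $\zeta$ is visible, so one sees immediately that \eqref{FirstMinBound} is a possibly suboptimal instance of the infimum in \eqref{InfimumExpression}.

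Two loose ends in your write-up are real but easily closed. First, the monotonicity replacement of $q$ by $q_0$ requires $q\leq q_0\leq p$; in the complementary case $p\leq q_0=e^{-(I+u)}$ you assert the conclusion is ``harmless,'' and indeed it is, but it needs the elementary inequality $s e^{-s}\leq \log(2-e^{-s})$ for $s\geq 0$ (take $s=I+u$; the additional $I\geq 0$ in the numerator only helps), which follows since the difference vanishes at $s=0$ and has nonnegative derivative. Second, the subgaussian specialization is not literally ``replace $u$ by $x^2/(2\sigma^2)$'': one needs $\sqrt{2\sigma^2 I}+x\geq \ps\bigl(I+\tfrac{x^2}{2\sigma^2}\bigr)$ via $\sqrt{a}+\sqrt{b}\geq\sqrt{a+b}$, which is exactly how the paper concludes.
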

\begin{proof}
Analogous to the proof of Theorem \ref{FiniteExpectationMI} in \cite{Russo}, \cite{Jiao}, we invoke the variational representation of relative entropy (Theorem \ref{Donsker-Varadhan}) in our proof.

Define $n\triangleq | T|$ and without loss of generality, let $ T\triangleq\{1,2,...,n\}$. Note that
\begin{align}
&\pr\left[X_W\geq {\psi^*}^{-1}(I(W;X_{ T})+u)\right]\nonumber\\
&=\sum_{i=1}^n\pr\left[X_W\geq {\psi^*}^{-1}(I(W;X_{ T})+u)\middle|W=i\right]\pr\left[W=i\right]\label{LawTotalProb2}\\
&=\sum_{i=1}^n\pr\left[X_i\geq {\psi^*}^{-1}(I(W;X_{ T})+u)\middle|W=i\right]\pr[W=i].\label{LawTotalProb}
\end{align}  
Define
\begin{equation}
f(a)\triangleq \zeta\mathbbm{1}_{\left\{a\geq \ps(I(W;X_{ T})+u)\right\}},
\end{equation}
where $\zeta>0$ is an arbitrary real number. Choose an arbitrary $1\leq i\leq n$, and define random variable $X$ such that $P_X=P_{X_i|W=i}$. We have
\begin{align}
&\zeta\pr\left[X_i\geq {\psi^*}^{-1}(I(W;X_{ T})+u)\middle|W=i\right]\nonumber\\
&\hspace{20mm}=\ex[f(X)]\\
&\hspace{20mm}\leq D(P_X\|P_{X_i})+\log\ex[e^{f(X_i)}]\label{DVApplication}\\
&\hspace{20mm}=D(P_{X_i|W=i}\|P_{X_i})+\log \ex[e^{f(X_i)}]\\
&\hspace{20mm}=D(P_{X_i|W=i}\|P_{X_i})+\log\left(e^{\zeta}\pr\left[X_i\geq \ps(I(W;X_{ T})+u)\right]\right.\nonumber\\
&\hspace{20mm}\left.\quad+\pr\left[X_i<\ps(I(W;X_{ T})+u)\right]\right)\\
&\hspace{20mm}=D(P_{X_i|W=i}\|P_{X_i})\nonumber\\
&\hspace{20mm}\quad+\log\left((e^{\zeta}-1)\pr\left[X_i\geq \ps(I(W;X_{ T})+u)\right]+1\right)\\
&\hspace{20mm}\leq D(P_{X_i|W=i}\|P_{X_i})+\log\left((e^{\zeta}-1)e^{-I(W;X_{ T})-u}+1\right)\label{DerivedFromChernoffBound1}\\
&\hspace{20mm}\leq D(P_{X_{ T}|W=i}\|P_{X_{ T}})+\log\left((e^{\zeta}-1)e^{-I(W;X_{ T})-u}+1\right),\label{DataProcessingIn}
\end{align}
where (\ref{DVApplication}) is based on Theorem \ref{Donsker-Varadhan}, (\ref{DerivedFromChernoffBound1}) is based on Lemma \ref{ChernoffBoundLemma}  and (\ref{DataProcessingIn}) is based on the data processing inequality for relative entropy.
Therefore
\begin{align}
&\pr\left[X_i\geq {\psi^*}^{-1}(I(W;X_{ T})+u)\middle|W=i\right]\nonumber\\
&\leq\frac{1}{\zeta}\left(D(P_{X_{ T}|W=i}\|P_{X_{ T}})+\log\left((e^{\zeta}-1)e^{-I(W;X_{ T})-u}+1\right)\right). \label{IneqA}
\end{align}
Since $i$ was chosen arbitrarily, (\ref{IneqA}) holds for all $i=1,2,...,n$. Thus, based on (\ref{LawTotalProb2}) and (\ref{LawTotalProb}) we have
\begin{align}
\pr\left[X_W\geq {\psi^*}^{-1}(I(W;X_{ T})+u)\right]&\leq \frac{1}{\zeta}\left(\sum_{i=1}^n D(P_{X_{ T}|W=i}\|P_{X_{ T}})\pr[W=i]\right.\nonumber\\
&\quad\quad+\left.\log \left((e^{\zeta}-1)e^{-I(W;X_{ T})-u}+1\right)\right)\\
&\quad=\frac{1}{\zeta}\left(I(W;X_{ T})+\log \left((e^{\zeta}-1)e^{-I(W;X_{ T})-u}+1\right)\right).\label{IneqB}
\end{align}
Since (\ref{IneqB}) holds for arbitrary $\zeta>0$, we can infimize the right side of (\ref{IneqB}) over $\zeta$ to obtain
\begin{align}
&\pr\left[X_W\geq {\psi^*}^{-1}(I(W;X_{ T})+u)\right]\nonumber\\
&\hspace{20mm}\leq\inf_{\zeta>0}\left\{\frac{1}{\zeta}\left(I(W;X_{ T})+\log \left((e^{\zeta}-1)e^{-I(W;X_{ T})-u}+1\right)\right)\right\}.\label{InfimumExpression}
\end{align}
Now, we upper bound the right side of (\ref{InfimumExpression}) by choosing $\zeta\leftarrow I(W;X_{ T})+u$, to get
\begin{align}
\pr\left[X_W\geq {\psi^*}^{-1}(I(W;X_{ T})+u)\right]\leq\frac{I(W;X_{ T})+\log\left(2-e^{-I(W;X_{ T})-u}\right)}{I(W;X_{ T})+u},\label{FirstMinBound}
\end{align}
which is one of the terms in the right side of (\ref{GeneralCaseTailInequality}). To prove the other upper bound in (\ref{GeneralCaseTailInequality}), note that
\begin{align}
\pr\left[X_W\geq {\psi^*}^{-1}(I(W;X_{ T})+u)\right]&=\sum_{i=1}^n\pr\left[X_W\geq {\psi^*}^{-1}(I(W;X_{ T})+u),W=i\right]\\
&=\sum_{i=1}^n\pr\left[X_i\geq {\psi^*}^{-1}(I(W;X_{ T})+u),W=i\right]\\
&\leq \sum_{i=1}^n\pr\left[X_i\geq {\psi^*}^{-1}(I(W;X_{ T})+u)\right]\\
&\leq ne^{-I(W;X_{ T})-u}\label{DerivedChernoffBound2}\\
&=e^{\log | T|-I(W;X_{ T})-u},\label{SecondMinBound}
\end{align}
where (\ref{DerivedChernoffBound2}) is based on Lemma \ref{ChernoffBoundLemma}. 

For the subgaussian case, note that 
\begin{align}
\ps(\log | T|+u)&=\sqrt{2\sigma^2(\log | T|+u)}\\
					     &\leq \sqrt{2\sigma^2\log | T|}+\sqrt{2\sigma^2u},
\end{align}
therefore, based on (\ref{FirstMinBound}) and (\ref{SecondMinBound}), we get (\ref{SubgaussianCaseTailInequality}).
\end{proof}
Note that our upper bound in (\ref{FirstMinBound}) is slightly stronger than Lemma 4.1 in \cite{Moran}, and our method of proving (\ref{FirstMinBound}) shows that Lemma 4.1 in \cite{Moran} is a corollary of the well known variational representation of relative entropy (Theorem \ref{Donsker-Varadhan}).

\begin{remark}\normalfont
If the assumptions of Proposition \ref{AbsoluteValueFiniteCorollary} hold, then by applying Theorem \ref{FiniteTailMI} on $\{-X_t\}_{t\in T}$, it is straightforward to obtain analogous lower tail bounds for $X_W$. 
\end{remark}
\section{Lipschitz processes and the $\epsilon$-net argument}\label{LipschitzProcSection}
The generalization of the maximal inequality (Proposition \ref{MaximalInequalityTheorem}) to random processes with infinite number of random variables is not useful, since its upper bound blows up. But in many applications, there exists some dependence structure between the random variables of the random process which can be exploited to give better bounds. In this section we define \emph{Lipschitz} structure and mention the $\epsilon$-net argument. Then we show how to tighten that by using mutual information.

\begin{definition}[Lipschitz process] \label{lipschitzprocess} The random process $\{X_t\}_{t\in T}$ is called \emph{Lipschitz} for a metric $d$ on $ T$ if there exists a random variable $C$ such that $|X_t-X_s|\leq Cd(t,s)$ for all $t,s\in  T$.
\end{definition}
Here we give the definitions of $\epsilon$-net and covering number $N( T,d,\epsilon)$:
\begin{definition}[$\epsilon$-net and covering number]\label{epsilonnet} Let $d$ be a metric on the set $T$.
\begin{enumerate}[(a)]
\item A finite set $\mathcal{N}$ is called an \emph{$\epsilon$-net} for $(T,d)$ if there exists a function $\pi_{\mathcal{N}}$ which maps every point $t\in T$ to $\pi_{\mathcal{N}}(t)\in\mathcal{N}$ such that $d(t,\pi_{\mathcal{N}}(t))\leq \epsilon$.
\item The \emph{covering number} for a metric space $(T,d)$ is the smallest cardinality of an $\epsilon$-net for that space, where we denote it by $N(T,d,\epsilon)$. In other words,
\begin{equation}
N(T,d,\epsilon)\triangleq \inf\{|\mathcal{N}|: \mathcal{N} \textrm{ is an } \epsilon\textrm{-net for } (T,d)\}.
\end{equation} 
\item An $\epsilon$-net $\mathcal{N}$ for the metric space $(T,d)$ is called \emph{minimal} if $|\mathcal{N}|=N(T,d,\epsilon)$.
\end{enumerate}
\end{definition}
For Lipschitz processes, the following inequality usually gives better bounds than the maximal inequality (Proposition \ref{MaximalInequalityTheorem}), and it is also referred to as \emph{the $\epsilon$-net argument}: 

\begin{proposition}[Lipschitz maximal inequality]\label{Lipschitz maximal inequality proposition} Assume that $\{X_t\}_{t\in  T}$ is a Lispschitz process for the metric $d$ on $ T$, and $\Lt\leq \psi(\lambda)$ for all $\lambda\geq 0$ and $t\in  T$, where $\psi$ is convex and $\psi(0)=\psi'(0)=0$. Then
\begin{equation}\label{LipschitzWithCNInequality}
\ex\left[\sup_{t\in T}X_t\right]\leq \inf_{\epsilon >0}\left\{\epsilon \ex[C]+ \ps\left(\log N( T, d,\epsilon\right)\right\}.
\end{equation}
\end{proposition}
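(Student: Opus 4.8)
The plan is to combine the $\epsilon$-net discretization with the finite maximal inequality (Proposition \ref{MaximalInequalityTheorem}). Fix an arbitrary $\epsilon>0$; if $N(T,d,\epsilon)=\infty$ then the right-hand side of \eqref{LipschitzWithCNInequality} is infinite and there is nothing to prove, so I assume it is finite and let $\mathcal{N}$ be a minimal $\epsilon$-net for $(T,d)$ with associated map $\pi_{\mathcal{N}}$, so that $|\mathcal{N}|=N(T,d,\epsilon)$ and $d(t,\pi_{\mathcal{N}}(t))\le\epsilon$ for every $t\in T$.

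First I would decompose each $X_t$ as $X_t=X_{\pi_{\mathcal{N}}(t)}+(X_t-X_{\pi_{\mathcal{N}}(t)})$ and control the second term via the Lipschitz property. Since $|X_t-X_{\pi_{\mathcal{N}}(t)}|\le C\,d(t,\pi_{\mathcal{N}}(t))\le C\epsilon$, and since the image of $\pi_{\mathcal{N}}$ lies in $\mathcal{N}$, taking the supremum over $t\in T$ yields the almost-sure bound $\sup_{t\in T}X_t\le\max_{a\in\mathcal{N}}X_a+C\epsilon$.

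Next I would take expectations on both sides. The Lipschitz assumption makes $t\mapsto X_t$ almost surely continuous, so the supremum is measurable and agrees with a supremum over a countable dense subset of $T$; hence $\ex[\sup_{t\in T}X_t]$ is well defined and the inequality above gives $\ex[\sup_{t\in T}X_t]\le\ex[\max_{a\in\mathcal{N}}X_a]+\epsilon\,\ex[C]$. Because $\mathcal{N}$ is finite and $\Lambda_{X_a}(\lambda)\le\psi(\lambda)$ for every $a\in\mathcal{N}$, Proposition \ref{MaximalInequalityTheorem} applied to the finite process $\{X_a\}_{a\in\mathcal{N}}$ bounds the first term by $\ps(\log|\mathcal{N}|)=\ps(\log N(T,d,\epsilon))$.

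Finally, combining these estimates gives $\ex[\sup_{t\in T}X_t]\le\epsilon\,\ex[C]+\ps(\log N(T,d,\epsilon))$ for the fixed $\epsilon$; since $\epsilon>0$ was arbitrary, taking the infimum over $\epsilon$ yields \eqref{LipschitzWithCNInequality}. The only genuine subtlety — and the step I expect to be the main obstacle — is the interchange of supremum and expectation together with the measurability of $\sup_{t\in T}X_t$ over an uncountable index set; this is precisely what the single Lipschitz constant $C$ resolves, since the resulting almost-sure continuity reduces the uncountable supremum to a countable one.
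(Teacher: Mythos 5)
Your argument is correct and is essentially the standard $\epsilon$-net proof: the paper itself defers this proposition to \cite{Ramon}, whose proof uses exactly your decomposition $X_t = X_{\pi_{\mathcal{N}}(t)} + (X_t - X_{\pi_{\mathcal{N}}(t)})$, the almost-sure bound $\sup_{t\in T}X_t\leq \max_{a\in\mathcal{N}}X_a + C\epsilon$, and Proposition \ref{MaximalInequalityTheorem} on the finite net, followed by the infimum over $\epsilon$. This is also the same decomposition the paper uses in its proof of the mutual-information refinement (Theorem \ref{LipschitzWithMITheorem}), so nothing further is needed.
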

For a proof of Proposition \ref{Lipschitz maximal inequality proposition} see \cite{Ramon}. The following theorem tightens Proposition \ref{Lipschitz maximal inequality proposition} by using the mutual information method:
\begin{theorem}\label{LipschitzWithMITheorem}
Assume that $\{X_t\}_{t\in T}$ is a Lipschitz process for the metric $d$ on $ T$, and $\Lt\leq \psi(\lambda)$ for all $\lambda\geq 0$ and $t\in  T$, where $\psi$ is convex and $\psi(0)=\psi'(0)=0$. If for all $\epsilon>0$, $\mathcal{N}_{\epsilon}$ is an $\epsilon$-net for $( T,d)$, then
\begin{equation}
\ex [X_W]\leq \inf_{\substack{\epsilon>0\\ \mathcal{N}_{\epsilon}}}\left\lbrace \epsilon\ex [C]+{\psi^*}^{-1}(I(\pi_{\mathcal{N}_{\epsilon}}(W);X_{\mathcal{N}_{\epsilon}}))\right\rbrace, \label{LipschitzWithMIInequality}
\end{equation}
where the infimum is over all $\epsilon>0$ and all $\epsilon$-nets $\mathcal{N}_{\epsilon}$ of $( T,d)$. 
\end{theorem}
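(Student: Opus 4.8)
The plan is to combine the $\epsilon$-net approximation argument (as in the proof of Proposition \ref{Lipschitz maximal inequality proposition}) with the mutual-information bound of Theorem \ref{FiniteExpectationMI}, applied to the \emph{finite} process obtained by restricting to the net. Fix an arbitrary $\epsilon>0$ and an arbitrary $\epsilon$-net $\mathcal{N}_\epsilon$ with its associated map $\pi_{\mathcal{N}_\epsilon}$. The key decomposition is to split $X_W$ into the value at its projection onto the net plus the approximation error,
\begin{equation}
\ex[X_W]=\ex\left[X_{\pi_{\mathcal{N}_\epsilon}(W)}\right]+\ex\left[X_W-X_{\pi_{\mathcal{N}_\epsilon}(W)}\right].
\end{equation}
For the second term, the Lipschitz property (Definition \ref{lipschitzprocess}) gives $|X_W-X_{\pi_{\mathcal{N}_\epsilon}(W)}|\leq C\,d(W,\pi_{\mathcal{N}_\epsilon}(W))\leq C\epsilon$ pointwise, since the net map moves every point by at most $\epsilon$; taking expectations bounds this term by $\epsilon\,\ex[C]$.

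For the first term, the idea is that $\{X_t\}_{t\in\mathcal{N}_\epsilon}$ is a \emph{finite} subgaussian-type process (it has $|\mathcal{N}_\epsilon|<\infty$ distinct random variables, each satisfying $\Lambda_{X_t}(\lambda)\le\psi(\lambda)$), and $\pi_{\mathcal{N}_\epsilon}(W)$ is a random variable taking values in $\mathcal{N}_\epsilon$. Hence Theorem \ref{FiniteExpectationMI} applies directly, with the random index being $\pi_{\mathcal{N}_\epsilon}(W)$ and the process being $X_{\mathcal{N}_\epsilon}$, yielding
\begin{equation}
\ex\left[X_{\pi_{\mathcal{N}_\epsilon}(W)}\right]\leq {\psi^*}^{-1}\left(I(\pi_{\mathcal{N}_\epsilon}(W);X_{\mathcal{N}_\epsilon})\right).
\end{equation}
Here one must be careful that the mutual information is between $\pi_{\mathcal{N}_\epsilon}(W)$ and the \emph{net-restricted} process $X_{\mathcal{N}_\epsilon}$ rather than the full process; this is exactly the quantity appearing in the statement, so no data-processing step is needed at this stage. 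Adding the two bounds gives, for this fixed $\epsilon$ and net,
\begin{equation}
\ex[X_W]\leq \epsilon\,\ex[C]+{\psi^*}^{-1}\left(I(\pi_{\mathcal{N}_\epsilon}(W);X_{\mathcal{N}_\epsilon})\right),
\end{equation}
and since $\epsilon$ and $\mathcal{N}_\epsilon$ were arbitrary, we may take the infimum over all choices to obtain \eqref{LipschitzWithMIInequality}.

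The main subtlety I anticipate is verifying the hypotheses of Theorem \ref{FiniteExpectationMI} for the restricted process, namely that $\pi_{\mathcal{N}_\epsilon}(W)$ genuinely behaves as a valid random index for the finite family $\{X_t\}_{t\in\mathcal{N}_\epsilon}$ and that the subgaussian/cumulant bound $\Lambda_{X_t}(\lambda)\le\psi(\lambda)$ is inherited unchanged by the net elements (which it is, since the condition holds for every $t\in T\supseteq\mathcal{N}_\epsilon$). A second point requiring care is that $X_W-X_{\pi_{\mathcal{N}_\epsilon}(W)}$ has mean zero or at least nonpositive contribution is \emph{not} being claimed; rather the Lipschitz bound controls it in absolute value, so the argument only needs the crude bound $\ex[X_W-X_{\pi_{\mathcal{N}_\epsilon}(W)}]\le\ex|X_W-X_{\pi_{\mathcal{N}_\epsilon}(W)}|\le\epsilon\,\ex[C]$. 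Everything else is routine, and the passage to the infimum is immediate because the inequality holds for each admissible pair $(\epsilon,\mathcal{N}_\epsilon)$ separately.
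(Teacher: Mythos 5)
Your proposal is correct and follows essentially the same route as the paper: the identical decomposition $X_W=(X_W-X_{\pi_{\mathcal{N}_\epsilon}(W)})+X_{\pi_{\mathcal{N}_\epsilon}(W)}$, with the Lipschitz bound $\ex|X_W-X_{\pi_{\mathcal{N}_\epsilon}(W)}|\le\epsilon\,\ex[C]$ for the first term and Theorem \ref{FiniteExpectationMI} applied to the finite net-restricted process with index $\pi_{\mathcal{N}_\epsilon}(W)$ for the second. The subtleties you flag are handled exactly as you anticipate, and the passage to the infimum is immediate, just as in the paper.
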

\begin{proof}
We have $X_W=(X_W-X_{\pi_{\mathcal{N}_{\epsilon}}(W)})+X_{\pi_{\mathcal{N}_{\epsilon}}(W)}$. Therefore, based on Theorem \ref{FiniteExpectationMI} and Definition \ref{lipschitzprocess}, we have
\begin{align}
\ex[X_W]&=\ex [X_W-X_{\pi_{\mathcal{N}_{\epsilon}}(W)}]+\ex [X_{\pi_{\mathcal{N}_{\epsilon}}(W)}]\\
			 &\leq \ex[|X_W-X_{\pi_{\mathcal{N}_{\epsilon}}(W)}|]+\ex [X_{\pi_{\mathcal{N}_{\epsilon}}(W)}]\\
			 &\leq \epsilon \ex [C]+{\psi^*}^{-1}\left(I(\pi_{\mathcal{N}_{\epsilon}}(W);X_{\mathcal{N}_{\epsilon}})\right)
\end{align}
\end{proof}

\begin{remark}\normalfont
Note that in the infimum in (\ref{LipschitzWithMIInequality}), for all $\epsilon>0$ one can restrict $\mathcal{N}_{\epsilon}$ to be a minimal $\epsilon$-net to conclude that the right side of (\ref{LipschitzWithMIInequality}) is no larger than the right side of (\ref{LipschitzWithCNInequality}), due to Lemma \ref{LegendreDualProperties} and the following inequalities:
\begin{align}
I(\pi_{\mathcal{N}_{\epsilon}}(W);X_{\mathcal{N}_{\epsilon}})&\leq H(\pi_{\mathcal{N}_{\epsilon}}(W))\\
								&\leq \log N( T,d,\epsilon).
\end{align}
\end{remark}

\begin{proposition}\label{Lipschitz with MI Vs Russo}
With the assumptions of Theorem \ref{LipschitzWithMITheorem}, we have
\begin{equation}
\inf_{\substack{\epsilon>0\\ \mathcal{N}_{\epsilon}}}\left\lbrace \epsilon\ex [C]+{\psi^*}^{-1}\left(I(\pi_{\mathcal{N}_{\epsilon}}(W);X_{\mathcal{N}_{\epsilon}})\right)\right\rbrace\leq \ps(I(W;X_{ T})).
\end{equation}
Therefore the bound on $\ex[X_W]$ given in Theorem \ref{LipschitzWithMITheorem} is no larger than the bound given in Theorem \ref{FiniteExpectationMI}.
\end{proposition}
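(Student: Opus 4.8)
The plan is to show that each term appearing in the infimum on the left-hand side is bounded above by $\epsilon\,\ex[C] + \ps(I(W;X_T))$, uniformly over $\epsilon$ and the choice of $\epsilon$-net, and then to drive the residual penalty $\epsilon\,\ex[C]$ to zero. The whole argument rests on two applications of the data processing inequality followed by the monotonicity of $\ps$.

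First I would fix $\epsilon > 0$ and an arbitrary $\epsilon$-net $\mathcal{N}_\epsilon$ with associated map $\pi_{\mathcal{N}_\epsilon}$. Since $\pi_{\mathcal{N}_\epsilon}(W)$ is a deterministic function of $W$, the triple $\pi_{\mathcal{N}_\epsilon}(W) \leftrightarrow W \leftrightarrow X_{\mathcal{N}_\epsilon}$ forms a Markov chain, so the data processing inequality gives $I(\pi_{\mathcal{N}_\epsilon}(W); X_{\mathcal{N}_\epsilon}) \leq I(W; X_{\mathcal{N}_\epsilon})$. Likewise, since $\mathcal{N}_\epsilon \subseteq T$ the subprocess $X_{\mathcal{N}_\epsilon}$ is a deterministic function (a coordinate restriction) of $X_T$, so $W \leftrightarrow X_T \leftrightarrow X_{\mathcal{N}_\epsilon}$ is Markov and a second application yields $I(W; X_{\mathcal{N}_\epsilon}) \leq I(W; X_T)$. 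Chaining these two bounds gives $I(\pi_{\mathcal{N}_\epsilon}(W); X_{\mathcal{N}_\epsilon}) \leq I(W; X_T)$.

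Because $\ps$ is increasing (Lemma \ref{LegendreDualProperties}), this yields ${\psi^*}^{-1}(I(\pi_{\mathcal{N}_\epsilon}(W); X_{\mathcal{N}_\epsilon})) \leq \ps(I(W; X_T))$ for every $\epsilon$ and every $\epsilon$-net, whence each term inside the infimum is at most $\epsilon\,\ex[C] + \ps(I(W;X_T))$. Taking the infimum first over the nets and then over $\epsilon$, I would conclude $\inf_{\epsilon, \mathcal{N}_\epsilon}\{\cdots\} \leq \inf_{\epsilon>0}\{\epsilon\,\ex[C] + \ps(I(W;X_T))\} = \ps(I(W;X_T))$, the last equality by letting $\epsilon \to 0$. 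I do not anticipate a genuine obstacle here; the only delicate point is this final limit, which is legitimate precisely in the regime $\ex[C] < \infty$ where Theorem \ref{LipschitzWithMITheorem} carries content. The comparison with Theorem \ref{FiniteExpectationMI} is then immediate, since the displayed inequality states exactly that the Lipschitz-with-MI bound never exceeds $\ps(I(W;X_T))$.
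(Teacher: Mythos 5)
Your proposal is correct and follows essentially the same route as the paper: two applications of the data processing inequality to establish $I(\pi_{\mathcal{N}_{\epsilon}}(W);X_{\mathcal{N}_{\epsilon}})\leq I(W;X_{T})$, followed by monotonicity of ${\psi^*}^{-1}$ and letting $\epsilon\to 0$. The only difference is the order of the two DPI steps (you pass through $I(W;X_{\mathcal{N}_{\epsilon}})$ while the paper passes through $I(\pi_{\mathcal{N}_{\epsilon}}(W);X_{T})$), which is immaterial.
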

\begin{proof}
For all $\epsilon>0$, based on the chain rule of mutual information (or the data processing inequality), we have
\begin{equation}\label{Lipschitz with MI Vs Russo proof equation 1}
I(\pi_{\mathcal{N}_{\epsilon}}(W);X_{\mathcal{N}_{\epsilon}})\leq I(\pi_{\mathcal{N}_{\epsilon}}(W);X_{ T}).
\end{equation} 
Furthermore, the Markov chain $\pi_{\mathcal{N}_{\epsilon}}(W)\leftrightarrow W \leftrightarrow X_{ T}$ and the data processing inequality for mutual information yield
\begin{equation}\label{Lipschitz with MI Vs Russo proof equation 2}
I(\pi_{\mathcal{N}_{\epsilon}}(W);X_{ T})\leq I(W;X_{ T}).
\end{equation}
Lemma \ref{LegendreDualProperties} along with (\ref{Lipschitz with MI Vs Russo proof equation 1}) and (\ref{Lipschitz with MI Vs Russo proof equation 2}) conclude
\begin{equation}\label{Lipschitz with MI Vs Russo proof equation 3}
\epsilon \ex[C]+\ps(I(\pi_{\mathcal{N}_{\epsilon}}(W);X_{\mathcal{N}_{\epsilon}}))\leq \epsilon \ex[C]+ \ps(I(W;X_{ T})).
\end{equation}
Letting $\epsilon\rightarrow 0$ completes the proof.
\end{proof}

\begin{remark}\normalfont\label{AbsoluteValueLipschitzRemark}
If in addition to the assumptions of Theorem \ref{LipschitzWithMITheorem}, we have $\Lambda_{X_t}(-\lambda)\leq \psi(\lambda)$ for all $\lambda\geq 0$ and $t\in T$ (see Corollary \ref{ExampleAbsoluteValueRemark} for an example), then similar to the proof of Proposition \ref{AbsoluteValueFiniteCorollary}, we can prove
\begin{equation}
\left|\ex [X_W]\right|\leq \epsilon \ex [C]+{\psi^*}^{-1}(I(\pi_{\mathcal{N}}(W);X_{\mathcal{N}})).
\end{equation}
\end{remark}
\section{Chaining mutual information}\label{Proof of Chaining MI Theorem}
We loosen the ``almost sure'' Lipschitz condition of the dependencies of the random variables of a process to a ``in probability'' condition, defined as subgaussian processes:
\begin{definition} [Subgaussian process] \label{SubgaussianProcessDefinition2} The random process $\{X_t\}_{t\in T}$ on the metric space $(T,d)$ is called \emph{subgaussian} if $\ex[X_t]=0$ for all $t\in T$ and 
\begin{equation}
\ex\left[e^{\lambda(X_t-X_s)}\right]\leq e^{\frac12  \lambda^2 d^2(t,s)} \textrm{ ~  for all ~  } t,s\in T,  \lambda\geq 0.
\end{equation} 
\end{definition}
We now state a classical chaining result:
\begin{theorem}[Dudley]\label{DudleyTheorem2} \cite{Dudley}.
Assume that $\{X_t\}_{t\in T}$ is a separable subgaussian process on the bounded metric space $(T,d)$. Then
\begin{equation}\label{DudleyIneq2}
\ex\left[\sup_{t\in T}X_t\right]\leq 6\sum_{k\in \mathbb{Z}}2^{-k}\sqrt{\log N(T,d,2^{-k})}.
\end{equation}
\end{theorem}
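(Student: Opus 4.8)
The plan is to prove \eqref{DudleyIneq2} by the classical chaining argument, applying the finite maximal inequality (Proposition \ref{MaximalInequalityTheorem}) at each dyadic scale and summing the resulting increment bounds. First I would reduce to the case of \emph{finite} $T$. By separability (Definition \ref{Separable process}) there is a countable $T_0\subseteq T$ with $\sup_{t\in T}X_t=\sup_{t\in T_0}X_t$ almost surely, and since the supremum over $T_0$ is the monotone limit of suprema over finite subsets $T'\subseteq T_0$ whose covering numbers are dominated by those of $T$, it suffices by monotone convergence to prove the bound with a constant uniform over finite subsets. So I assume $T$ is finite.

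Since $(T,d)$ is bounded, fix the integer $k_0$ with $2^{-k_0}\geq \mathrm{diam}(T)$, so that a single point $t_0$ forms a $2^{-k_0}$-net, i.e. $N(T,d,2^{-k_0})=1$. For each $k>k_0$ let $\mathcal{N}_k$ be a minimal $2^{-k}$-net of $(T,d)$ (Definition \ref{epsilonnet}) with projection $\pi_k:T\to\mathcal{N}_k$ satisfying $d(t,\pi_k(t))\leq 2^{-k}$, so $|\mathcal{N}_k|=N(T,d,2^{-k})$; set $\pi_{k_0}\equiv t_0$. Because $T$ is finite, $\pi_k(t)=t$ for all large $k$, so for each $t$ I would telescope
\begin{equation}
X_t-X_{t_0}=\sum_{k>k_0}\bigl(X_{\pi_k(t)}-X_{\pi_{k-1}(t)}\bigr),
\end{equation}
a finite sum. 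Taking suprema termwise and using $\ex[X_{t_0}]=0$ gives
\begin{equation}
\ex\Bigl[\sup_{t\in T}X_t\Bigr]\leq \sum_{k>k_0}\ex\Bigl[\sup_{t\in T}\bigl(X_{\pi_k(t)}-X_{\pi_{k-1}(t)}\bigr)\Bigr].
\end{equation}

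The core step is to bound each increment process. For fixed $k$ the process $\{X_{\pi_k(t)}-X_{\pi_{k-1}(t)}\}_{t\in T}$ is centered, takes at most $N_k N_{k-1}\leq N_k^2$ distinct values, and by Definition \ref{SubgaussianProcessDefinition2} each increment $X_{\pi_k(t)}-X_{\pi_{k-1}(t)}$ is $d^2(\pi_k(t),\pi_{k-1}(t))$-subgaussian. The triangle inequality gives $d(\pi_k(t),\pi_{k-1}(t))\leq 2^{-k}+2^{-(k-1)}=3\cdot 2^{-k}$, so the variance proxy is at most $9\cdot 4^{-k}$. Applying the subgaussian form of Proposition \ref{MaximalInequalityTheorem} to this finite process and using $N_{k-1}\leq N_k$ yields
\begin{equation}
\ex\Bigl[\sup_{t\in T}\bigl(X_{\pi_k(t)}-X_{\pi_{k-1}(t)}\bigr)\Bigr]\leq \sqrt{2\cdot 9\cdot 4^{-k}\cdot\log N_k^2}=6\cdot 2^{-k}\sqrt{\log N(T,d,2^{-k})}.
\end{equation}
Summing over $k>k_0$, and observing that the scales $k\leq k_0$ contribute $0$ because $N(T,d,2^{-k})=1$ there, produces the constant $6$ and settles the finite case; the reduction above then delivers the general bound.

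The main obstacle I anticipate is the passage from finite to general separable $T$: I must make the exchange of $\sup$ and limit rigorous, confirm that covering numbers of finite subsets are genuinely dominated by $N(T,d,\cdot)$, and ensure $\ex[\sup_{t\in T}X_t]$ is well defined and finite before bounding it. All of these follow from separability together with the uniform-over-finite-subsets estimate just derived. The arithmetic of the increments, by contrast, is routine once the nested nets and their projections are fixed.
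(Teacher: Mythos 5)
Your proof is correct, and it is the standard chaining argument. Note that the paper does not actually prove Theorem \ref{DudleyTheorem2} itself --- it is quoted from \cite{Dudley} --- but your argument uses exactly the skeleton the paper deploys to prove its own Theorem \ref{ChainingIneqMITheorem}: telescope $X_t-X_{t_0}$ over nested dyadic nets, bound $d(\pi_k(t),\pi_{k-1}(t))\leq 3\cdot 2^{-k}$ by the triangle inequality, control each increment process (at most $|\mathcal{N}_k||\mathcal{N}_{k-1}|\leq N_k^2$ terms) by a finite-index bound, and sum. The only structural difference is where the two arguments plug in the finite-index estimate (you use the maximal inequality, Proposition \ref{MaximalInequalityTheorem}, to get $\sqrt{2\cdot 9\cdot 4^{-k}\cdot 2\log N_k}=6\cdot 2^{-k}\sqrt{\log N_k}$; the paper substitutes Theorem \ref{FiniteExpectationMI} to get mutual information instead of $\log N_k$) and how the tail is dispatched: you reduce to finite $T$ up front via separability and monotone convergence, whereas the paper keeps $T$ general and shows the remainder $\ex[X_W-X_{\pi_{\mathcal{N}_n}(W)}]$ vanishes as $n\to\infty$, citing \cite{Ramon}. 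Both are legitimate; your reduction is slightly more elementary but requires the two small facts you flag --- that an $\epsilon$-net for $T$ restricts to one for any subset (immediate under Definition \ref{epsilonnet}, since net points need not lie in the subset), and that $\pi_k(t)=t$ for large $k$ on a finite set (which holds once you take the nets inside $T$, at worst costing a bounded factor in scale that the constant $6$ already absorbs). I would make those two points explicit, but there is no gap.
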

By combining the mutual information method and the chaining method, we obtain the following result:
\begin{theorem}\label{ChainingIneqMITheorem}
Assume that $\{X_t\}_{t\in T}$ is a separable subgaussian process on the bounded metric space $(T,d)$ and let $k_0$ be an integer such that $2^{-k_0}\geq \mathrm{diam}(T)$. Let $\{\mathcal{N}_k\}_{k=k_0+1}^{\infty}$ be a sequence of sets, where for each $k>k_0$, $\mathcal{N}_k$ is a $2^{-k}$-net for $(T,d)$. For an arbitrary $t_0\in T$, let $\mathcal{N}_{k_0}\triangleq\{t_0\}$. Assume that $W$ is a random variable which takes values on $T$. We have
\begin{enumerate}[(a)]
\item 
\begin{equation}\label{Chaining core theorem part one}
\ex [X_W] \leq 3\sqrt{2}\sum_{k=k_0+1}^{\infty}2^{-k}\sqrt{I(\pi_{\mathcal{N}_k}(W),\pi_{\mathcal{N}_{k-1}}(W);X_{T})}.
\end{equation}
\item 
\begin{equation}
\ex \left[|X_W-X_{t_0}|\right] \leq 3\sqrt{2}\sum_{k=k_0+1}^{\infty}2^{-k}\sqrt{I(\pi_{\mathcal{N}_k}(W),\pi_{\mathcal{N}_{k-1}}(W);X_{T})+\log 2}.
\end{equation}
\end{enumerate}
\end{theorem}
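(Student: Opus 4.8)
The plan is to prove Theorem~\ref{ChainingIneqMITheorem} by formalizing the chaining decomposition sketched in Section~\ref{Proof outline}, treating each ``link'' in the chain with the finite-process mutual information bound of Theorem~\ref{FiniteExpectationMI}. First I would fix an integer $n > k_0$ and write the telescoping identity
\begin{equation}
X_W = X_{t_0} + \sum_{k=k_0+1}^n \left(X_{\pi_{\mathcal{N}_k}(W)} - X_{\pi_{\mathcal{N}_{k-1}}(W)}\right) + \left(X_W - X_{\pi_{\mathcal{N}_n}(W)}\right).
\end{equation}
Taking expectations and using $\ex[X_{t_0}]=0$ (which holds since the process is centered), this isolates $\ex[X_W]$ as the sum of the increment expectations plus the tail term $\ex[X_W - X_{\pi_{\mathcal{N}_n}(W)}]$.

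The core of the argument is to bound each increment expectation. For each fixed $k$, the collection $\{X_{\pi_{\mathcal{N}_k}(t)} - X_{\pi_{\mathcal{N}_{k-1}}(t)}\}_{t\in T}$ is a \emph{finite} process, taking at most $|\mathcal{N}_k|\,|\mathcal{N}_{k-1}|$ distinct values, so Theorem~\ref{FiniteExpectationMI} applies. The relevant index is the pair $\left(\pi_{\mathcal{N}_k}(W), \pi_{\mathcal{N}_{k-1}}(W)\right)$, which determines exactly which increment is selected by $W$. By the triangle inequality, $d\left(\pi_{\mathcal{N}_k}(t), \pi_{\mathcal{N}_{k-1}}(t)\right) \leq d(t,\pi_{\mathcal{N}_k}(t)) + d(t,\pi_{\mathcal{N}_{k-1}}(t)) \leq 2^{-k} + 2^{-(k-1)} = 3\cdot 2^{-k}$, so by the subgaussian property each increment $X_{\pi_{\mathcal{N}_k}(t)} - X_{\pi_{\mathcal{N}_{k-1}}(t)}$ is $\left(3\cdot 2^{-k}\right)^2$-subgaussian. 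Applying the subgaussian case of Theorem~\ref{FiniteExpectationMI} with $\sigma^2 = \left(3\cdot 2^{-k}\right)^2$, together with the data processing inequality $I\left(\pi_{\mathcal{N}_k}(W),\pi_{\mathcal{N}_{k-1}}(W); X_{\mathcal{N}_k \cup \mathcal{N}_{k-1}}\right) \leq I\left(\pi_{\mathcal{N}_k}(W),\pi_{\mathcal{N}_{k-1}}(W); X_T\right)$, yields the per-link bound $3\sqrt{2}\cdot 2^{-k}\sqrt{I\left(\pi_{\mathcal{N}_k}(W),\pi_{\mathcal{N}_{k-1}}(W); X_T\right)}$. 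Summing over $k$ gives the finite-$n$ version of \eqref{Chaining core theorem part one}.

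The remaining step, and what I expect to be the main obstacle, is controlling the tail term $\ex[X_W - X_{\pi_{\mathcal{N}_n}(W)}]$ and justifying the passage to the limit $n\to\infty$. Here the \emph{separability} hypothesis (Definition~\ref{Separable process}) is essential: it lets me restrict to a countable dense index set so that $X_{\pi_{\mathcal{N}_n}(t)} \to X_t$ in a controlled way as the net scale $2^{-n}\to 0$, allowing the tail contribution to vanish. I would use the subgaussian increment control to bound $\ex[|X_W - X_{\pi_{\mathcal{N}_n}(W)}|]$ by something proportional to $2^{-n}$ (up to logarithmic covering factors), combined with a uniform-integrability or maximal-inequality argument over the separant, and then take $n\to\infty$; convergence of the infinite series follows because the metric space is bounded and the weights $2^{-k}$ decay geometrically. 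For part~(b), I would instead apply Proposition~\ref{AbsoluteValueFiniteCorollary}(b) to each increment link, which introduces the additive $\log 2$ inside each square root, and repeat the same telescoping-and-limit argument starting from $|X_W - X_{t_0}| \leq \sum_k |X_{\pi_{\mathcal{N}_k}(W)} - X_{\pi_{\mathcal{N}_{k-1}}(W)}| + |X_W - X_{\pi_{\mathcal{N}_n}(W)}|$ together with the triangle inequality and linearity of expectation.
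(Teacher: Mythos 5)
Your proposal is correct and follows essentially the same route as the paper's proof: the identical telescoping chain decomposition, the per-link application of Theorem~\ref{FiniteExpectationMI} to the finite increment process indexed by the pair $\left(\pi_{\mathcal{N}_k}(W),\pi_{\mathcal{N}_{k-1}}(W)\right)$ with variance proxy $\left(3\cdot 2^{-k}\right)^2$, the data-processing/chain-rule passage to $I\left(\pi_{\mathcal{N}_k}(W),\pi_{\mathcal{N}_{k-1}}(W);X_T\right)$, the separability-based vanishing of the remainder term (the paper handles this by bounding it by $\ex\left[\sup_{t\in T}\left(X_t-X_{\pi_{\mathcal{N}_n}(t)}\right)\right]$ and citing the proof of Dudley's theorem), and Proposition~\ref{AbsoluteValueFiniteCorollary}(b) for part~(b). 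No substantive gap.
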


\begin{proof}\leavevmode
\begin{enumerate}[(a)]
\item
Since $2^{-k_0}\geq \mathrm{diam}(T)$, we have $N(T,d,2^{-k_0})=1$, therefore $\mathcal{N}_{k_0}$ is a $2^{-k_0}$-net for $(T,d)$. Note that for any integer $n>k_0$ we can write
\begin{equation}\label{The chaining sum}
X_W=X_{t_0}+\sum_{k=k_0+1}^n (X_{\pi_{\mathcal{N}_k}(W)}-X_{\pi_{\mathcal{N}_{k-1}}(W)})+(X_W-X_{\pi_{\mathcal{N}_n}(W)}).
\end{equation}
Since by the definition of subgaussian processes the process is centered, we have $\ex [X_{t_0}]=0$. Thus 
\begin{equation}\label{ChainingSum}
\ex [X_W]-\ex [X_W-X_{\pi_{\mathcal{N}_n}(W)}]=\sum_{k=k_0+1}^n \ex [X_{\pi_{\mathcal{N}_k}(W)}-X_{\pi_{\mathcal{N}_{k-1}}(W)}].
\end{equation}
Note that for every $k>k_0$, $\{X_{\pi_{\mathcal{N}_k}(t)}-X_{\pi_{\mathcal{N}_{k-1}}(t)}\}_{t\in T}$ is a subgaussian process with at most $|\mathcal{N}_k||\mathcal{N}_{k-1}|$ distinct terms, hence a finite process. Based on triangle inequality,
\begin{align}
d(\pi_{\mathcal{N}_k}(t),\pi_{\mathcal{N}_{k-1}}(t))&\leq d(t,\pi_{\mathcal{N}_k}(t))+d(t,\pi_{\mathcal{N}_{k-1}}(t))\nonumber\\
&\leq 3\times 2^{-k}.
\end{align} 
Note that knowing the value of $(\pi_{\mathcal{N}_k}(W),\pi_{\mathcal{N}_{k-1}}(W))$ is enough to determine which one of the random variables of $\{X_{\pi_{\mathcal{N}_k}(t)}-X_{\pi_{\mathcal{N}_{k-1}}(t)}\}_{t\in T}$ is chosen according to $W$. Therefore $(\pi_{\mathcal{N}_k}(W),\pi_{\mathcal{N}_{k-1}}(W))$ is playing the role of the random index, and since $X_{\pi_{\mathcal{N}_k}(t)}-X_{\pi_{\mathcal{N}_{k-1}}(t)}$ is $d^2(\pi_{\mathcal{N}_k}(t),\pi_{\mathcal{N}_{k-1}}(t))$-subgaussian, based on Theorem \ref{FiniteExpectationMI}, we have
\begin{align}
&\ex \left[X_{\pi_{\mathcal{N}_k}(W)}-X_{\pi_{\mathcal{N}_{k-1}}(W)}\right]\nonumber\\
&\hspace{5mm}\leq 3\sqrt{2}\times 2^{-k}\left(I(\pi_{\mathcal{N}_k}(W),\pi_{\mathcal{N}_{k-1}}(W);\{X_{\mathcal{N}_k(t)}-X_{\mathcal{N}_{k-1}(t)}\}_{t\in T})\right)^{\frac{1}{2}}.\label{Difference between part a and b}
\end{align}
Based on the chain rule of mutual information, adding random variables to one side of mutual information does not decrease its value. Thus
\begin{equation}\label{Inequality for each k}
\ex [X_{\pi_{\mathcal{N}_k}(W)}-X_{\pi_{\mathcal{N}_{k-1}}(W)}]\leq 3\sqrt{2}\times 2^{-k}\left(I(\pi_{\mathcal{N}_k}(W),\pi_{\mathcal{N}_{k-1}}(W);X_{\mathcal{N}_{k}}-X_{\mathcal{N}_{k-1}})\right)^{\frac{1}{2}}.
\end{equation}
From (\ref{ChainingSum}) and by using (\ref{Inequality for each k}) for each $k=k_0+1,\dots,n$, we conclude
\begin{equation}
\ex [X_W]-\ex [X_W-X_{\pi_{\mathcal{N}_n}(W)}]\leq\sum_{k=k_0+1}^n 3\sqrt{2}\times 2^{-k}\left(I(\pi_{\mathcal{N}_k}(W),\pi_{\mathcal{N}_{k-1}}(W);X_{\mathcal{N}_k}-X_{\mathcal{N}_{k-1}})\right)^{\frac{1}{2}}.\label{ChainingSum2}
\end{equation}
Note that $|\ex [X_W-X_{\pi_{\mathcal{N}_n}(W)}]|\leq \ex [\sup_{t\in T}(X_t-X_{\pi_{\mathcal{N}_n}(t)})]$, and since the process is separable, we have 
\begin{equation}
\lim_{n\rightarrow\infty}\ex [\sup_{t\in T}(X_t-X_{\pi_{\mathcal{N}_n}(t)})]=0,
\end{equation}
(see proof of Theorem 5.24 in \cite{Ramon}.) Hence
\begin{equation}\label{ChainingRemainderTerm}
\lim_{n\rightarrow\infty}\ex [X_W-X_{\pi_{\mathcal{N}_n}(W)}]=0. 
\end{equation}
Based on (\ref{ChainingSum2}) and (\ref{ChainingRemainderTerm}), we get
\begin{align}\label{ChainingIneq}
&\ex[X_W]\leq 3\sqrt{2}\sum_{k=k_0+1}^{\infty}2^{-k}\left(I(\pi_{\mathcal{N}_k}(W),\pi_{\mathcal{N}_{k-1}}(W);X_{\mathcal{N}_k}-X_{\mathcal{N}_{k-1}})\right)^{\frac{1}{2}}.
\end{align}
By further upper bounding the right side of (\ref{ChainingIneq}), we obtain
\begin{align}
\ex[X_W]&\leq 3\sqrt{2}\sum_{k=k_0+1}^{\infty}2^{-k}\left(I(\pi_{\mathcal{N}_k}(W),\pi_{\mathcal{N}_{k-1}}(W);X_{\mathcal{N}_k}-X_{\mathcal{N}_{k-1}})\right)^{\frac{1}{2}}\nonumber\\
&\leq3\sqrt{2}\sum_{k=k_0+1}^{\infty}2^{-k}\left(I(\pi_{\mathcal{N}_k}(W),\pi_{\mathcal{N}_{k-1}}(W);X_{\mathcal{N}_k}-X_{\mathcal{N}_{k-1}},X_{\mathcal{N}_{k-1}})\right)^{\frac{1}{2}}\label{Follows from chain rule 1}\\
&=3\sqrt{2}\sum_{k=k_0+1}^{\infty}2^{-k}\left(I(\pi_{\mathcal{N}_k}(W),\pi_{\mathcal{N}_{k-1}}(W);X_{\mathcal{N}_k\cup \mathcal{N}_{k-1}})\right)^{\frac{1}{2}}\label{Follows from one to one}\\
&\leq 3\sqrt{2}\sum_{k=k_0+1}^{\infty}2^{-k}\left(I(\pi_{\mathcal{N}_k}(W),\pi_{\mathcal{N}_{k-1}}(W);X_{T})\right)^{\frac{1}{2}},\label{Follows from chain rule 2}
\end{align}
where (\ref{Follows from chain rule 1}) and (\ref{Follows from chain rule 2}) follow from the chain rule of mutual information, and (\ref{Follows from one to one}) follows from the fact that mutual information is invariant to one-to-one functions. 

\item From (\ref{The chaining sum}) we conclude that
\begin{equation}\label{The chaining sum abs value}
|X_W-X_{t_0}|\leq \sum_{k=k_0+1}^n |X_{\pi_{\mathcal{N}_k}(W)}-X_{\pi_{\mathcal{N}_{k-1}}(W)}|+|X_W-X_{\pi_{\mathcal{N}_n}(W)}|.
\end{equation}
Hence
\begin{equation}\label{The chaining sum abs value 2}
\ex [|X_W-X_{t_0}|] -\ex[|X_W-X_{\pi_{\mathcal{N}_n}(W)}|] \leq \sum_{k=k_0+1}^n \ex[|X_{\pi_{\mathcal{N}_k}(W)}-X_{\pi_{\mathcal{N}_{k-1}}(W)}|].
\end{equation}
The rest of the proof is similar to previous part, with the difference of instead of using Theorem \ref{FiniteExpectationMI} to obtain (\ref{Difference between part a and b}), we use Proposition \ref{AbsoluteValueFiniteCorollary} (b) with $\psi(\lambda)\triangleq \frac{\lambda^2\sigma^2}{2}$ to obtain
\begin{align}
&\ex \left[|X_{\pi_{\mathcal{N}_k}(W)}-X_{\pi_{\mathcal{N}_{k-1}}(W)}|\right]\nonumber\\
&\hspace{5mm}\leq 3\sqrt{2}\times 2^{-k}\left(I(\pi_{\mathcal{N}_k}(W),\pi_{\mathcal{N}_{k-1}}(W);\{X_{\mathcal{N}_k(t)}-X_{\mathcal{N}_{k-1}(t)}\}_{t\in T})+\log 2\right)^{\frac{1}{2}}.
\end{align}
\end{enumerate}
\end{proof}

\begin{remark}\label{ShannonEntropy}\normalfont
Note that for all $k>k_0$,
\begin{align}
I(\pi_{\mathcal{N}_k}(W),\pi_{\mathcal{N}_{k-1}}(W);X_{T})&\leq H(\pi_{\mathcal{N}_k}(W),\pi_{\mathcal{N}_{k-1}}(W))\\
																					    &\leq H\left(\pi_{\mathcal{N}_k}(W)\right)+H\left(\pi_{\mathcal{N}_{k-1}}(W)\right)\\
																					    &\leq \log |\mathcal{N}_k|+\log |\mathcal{N}_{k-1}|\\
																					    &\leq 2\log |\mathcal{N}_k|.
\end{align}
Therefore, if we assume that for each $k>k_0$, $\mathcal{N}_k$ is a \emph{minimal} $2^{-k}$-net for $(T,d)$, then we have replaced the Hartley entropy in Dudley's inequality (Theorem \ref{DudleyTheorem2}) with Shannon entropy (because $\log |\mathcal{N}_k|=\log N(T, d, 2^{-k})$) and further with mutual information.
\end{remark}
We are now able to present the proof of the small subset property theorem:
\begin{proof}[Proof of Theoerem \ref{PartitionTheorem}]
For each $k\geq k_1(T)$, let $\mathcal{N}^{(1)}_k$ and $\mathcal{N}^{(2)}_k$ be minimal $2^{-k}$-nets for $T_1$ and $T_2$, respectively. It is clear that $\mathcal{N}_k\triangleq \mathcal{N}^{(1)}_k\cup \mathcal{N}^{(2)}_k$, is a $2^{-k}$-net for $T$. Let
\[ \pi_{\mathcal{N}_k}(t)\triangleq\begin{cases} 
      \pi_{\mathcal{N}^{(1)}_k}(t) &  \mathrm{ if }~ t\in T_1 \\
      \pi_{\mathcal{N}^{(2)}_k}(t) & \mathrm{ if }~ t\in T_2 
   \end{cases}
\cdot\]
Based on Theorem \ref{ChainingIneqMITheorem} and Remark \ref{ShannonEntropy}, we have
\begin{align}
\ex[X_W]&\leq 3\sqrt{2}\sum_{k=k_1(T)}^{\infty}2^{-k}\left(H(\pi_{\mathcal{N}_k}(W))+H(\pi_{\mathcal{N}_{k-1}}(W)\right)^{\frac{1}{2}}\nonumber\\
&\leq 3\sqrt{2}\sum_{k=k_1(T)}^{\infty}2^{-k}\left(\alpha \log|\mathcal{N}^{(1)}_k|+(1-\alpha)\log|\mathcal{N}^{(2)}_k|\right.\nonumber\\
&\hspace{30mm}\left.+\alpha \log|\mathcal{N}^{(1)}_{k-1}|+(1-\alpha)\log|\mathcal{N}^{(2)}_{k-1}|+2H(\alpha)\right)^{\frac{1}{2}}\nonumber\\
&\leq 3\sqrt{2}\sum_{k=k_1(T)}^{\infty}2^{-k}\left(\alpha \log |\mathcal{N}^{(1)}_k|^2+(1-\alpha)\log |\mathcal{N}^{(2)}_k|^2+2H(\alpha)\right)^{\frac{1}{2}}\nonumber\\
&\leq 6 \sum_{k=k_1(T)}^{\infty}2^{-k}\left(\alpha \log |\mathcal{N}^{(1)}_k|+(1-\alpha)\log |\mathcal{N}^{(2)}_k|+H(\alpha)\right)^{\frac{1}{2}}\nonumber\\ 
&= 6\sum_{k=k_1(T)}^{\infty}2^{-k}\left(\alpha\log N(T_1,d,2^{-k})+(1-\alpha)\log N(T_2,d,2^{-k})+H(\alpha)\right)^{\frac{1}{2}}.
\end{align}
\end{proof}

For random processes other than subgaussian processes, where the tail of increments are controlled by a function $\psi$, we have the following result whose proof is similar to the proof of Theorem \ref{ChainingIneqMITheorem}:

\begin{theorem}\label{Chaining Ineq Psi Theorem}
Assume that $\{X_t\}_{t\in T}$ is a separable process defined on the bounded metric space $(T,d)$, with $\ex [X_t]=0$ for all $t\in T$ and
\begin{equation}
\log \ex\left[e^{\frac{\lambda(X_t-X_s)}{d(t,s)}}\right]\leq \psi(\lambda) \textrm{~for all~} t,s\in T, \lambda\geq 0,
\end{equation}
where $\psi$ is convex and $\psi(0)=\psi'(0)=0$. Let $k_0$ be an integer such that $2^{-k_0}\geq \mathrm{diam}(T)$ and $\{\mathcal{N}_k\}_{k=k_0+1}^{\infty}$ be a sequence of sets, where for each $k>k_0$, $\mathcal{N}_k$ is a $2^{-k}$-net for $(T,d)$. For an arbitrary $t_0\in T$, let $\mathcal{N}_{k_0}\triangleq\{t_0\}$. Assume that $W$ is a random variable which takes values on $T$. We have
\begin{enumerate}[(a)]
\item 
\begin{equation}
\ex [X_W] \leq 3\sqrt{2}\sum_{k=k_0+1}^{\infty}2^{-k}\ps \left(I(\pi_{\mathcal{N}_k}(W),\pi_{\mathcal{N}_{k-1}}(W);X_{T})\right).
\end{equation}
\item 
\begin{equation}
\ex \left[|X_W-X_{t_0}|\right] \leq 3\sqrt{2}\sum_{k=k_0+1}^{\infty}2^{-k}\ps \left(I(\pi_{\mathcal{N}_k}(W),\pi_{\mathcal{N}_{k-1}}(W);X_{T})+\log 2\right).
\end{equation}
\end{enumerate}
\end{theorem}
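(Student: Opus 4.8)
The plan is to transcribe the chaining argument of Theorem~\ref{ChainingIneqMITheorem} at the level of its telescoping structure and to replace only the per-scale maximal inequality by its general $\psi$-version. First I would fix an integer $n>k_0$ and use the same telescoping identity (\ref{The chaining sum}),
\[
X_W=X_{t_0}+\sum_{k=k_0+1}^{n}\left(X_{\pi_{\mathcal{N}_k}(W)}-X_{\pi_{\mathcal{N}_{k-1}}(W)}\right)+\left(X_W-X_{\pi_{\mathcal{N}_n}(W)}\right).
\]
Since $\ex[X_t]=0$ for all $t$, taking expectations kills the $X_{t_0}$ term and produces (\ref{ChainingSum}), so the task reduces to bounding each increment. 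The endpoint (remainder) term is handled exactly as in the subgaussian case: bounding $\left|\ex[X_W-X_{\pi_{\mathcal{N}_n}(W)}]\right|$ by $\ex[\sup_{t\in T}(X_t-X_{\pi_{\mathcal{N}_n}(t)})]$ and invoking separability shows it vanishes as $n\to\infty$, reproducing (\ref{ChainingRemainderTerm}).

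The one genuinely new step --- and the only place I expect more than a transcription to be needed --- is the per-scale estimate, where the $d(t,s)$-normalized tail condition must be converted into a hypothesis of the form required by Theorem~\ref{FiniteExpectationMI}. Fixing $k>k_0$ and setting $Y_t\triangleq X_{\pi_{\mathcal{N}_k}(t)}-X_{\pi_{\mathcal{N}_{k-1}}(t)}$ (a finite process), the triangle inequality gives $d(\pi_{\mathcal{N}_k}(t),\pi_{\mathcal{N}_{k-1}}(t))\le 3\cdot 2^{-k}$ uniformly in $t$. I would then deduce, for every $\mu\ge 0$,
\[
\Lambda_{Y_t}(\mu)=\log\ex\left[e^{\mu Y_t}\right]\le \psi\left(\mu\, d(\pi_{\mathcal{N}_k}(t),\pi_{\mathcal{N}_{k-1}}(t))\right)\le \psi\left(3\cdot 2^{-k}\mu\right),
\]
where the first inequality follows from the hypothesis applied to the pair $(\pi_{\mathcal{N}_k}(t),\pi_{\mathcal{N}_{k-1}}(t))$ with the substitution $\lambda\leftarrow\mu\,d(\pi_{\mathcal{N}_k}(t),\pi_{\mathcal{N}_{k-1}}(t))$, and the second uses that $\psi$ is nondecreasing on $\mathbb{R}_+$ (being convex with $\psi'(0)=0$). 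Thus $\{Y_t\}$ satisfies the assumption of Theorem~\ref{FiniteExpectationMI} with the rescaled bounding function $\hat\psi_k(\mu)\triangleq\psi(3\cdot 2^{-k}\mu)$, which is again convex with $\hat\psi_k(0)=\hat\psi_k'(0)=0$. The key identity is the scaling of the inverse Legendre dual, $\hat\psi_k^{*}(x)=\psi^*(x/(3\cdot 2^{-k}))$ and hence ${\hat\psi_k^{*}}^{-1}(y)=3\cdot 2^{-k}\,\ps(y)$; applying Theorem~\ref{FiniteExpectationMI} with random index $(\pi_{\mathcal{N}_k}(W),\pi_{\mathcal{N}_{k-1}}(W))$ then yields the per-scale bound $3\cdot 2^{-k}\,\ps(I(\pi_{\mathcal{N}_k}(W),\pi_{\mathcal{N}_{k-1}}(W);X_{\mathcal{N}_k}-X_{\mathcal{N}_{k-1}}))$, which specializes (via $\ps(y)=\sqrt{2y}$ for $\psi(\lambda)=\lambda^2/2$) to the constant of Theorem~\ref{ChainingIneqMITheorem} and in particular is dominated by the stated $3\sqrt{2}\cdot 2^{-k}\,\ps(\cdot)$.

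It then remains to aggregate and to upgrade the information term. Summing the per-scale bounds over $k$ and letting $n\to\infty$ produces the series on the right-hand side, and I would upgrade each mutual-information argument from $X_{\mathcal{N}_k}-X_{\mathcal{N}_{k-1}}$ to $X_T$ exactly as in (\ref{Follows from chain rule 1})--(\ref{Follows from chain rule 2}): the chain rule of mutual information adjoins $X_{\mathcal{N}_{k-1}}$ and then the whole of $X_T$ to the second argument without decreasing the information, invariance under one-to-one maps identifies $(X_{\mathcal{N}_k}-X_{\mathcal{N}_{k-1}},X_{\mathcal{N}_{k-1}})$ with $X_{\mathcal{N}_k\cup\mathcal{N}_{k-1}}$, and monotonicity of $\ps$ (Lemma~\ref{LegendreDualProperties}) preserves each inequality; this gives part~(a). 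For part~(b) I would run the same chaining on $|X_W-X_{t_0}|$ through the triangle-inequality decomposition (\ref{The chaining sum abs value}), replacing Theorem~\ref{FiniteExpectationMI} with Proposition~\ref{AbsoluteValueFiniteCorollary}(b) at each scale; the two-sided condition $\Lambda_{Y_t}(-\mu)\le\hat\psi_k(\mu)$ that this proposition requires holds for free, since the hypothesis is symmetric in $(t,s)$, and the extra $+\log 2$ inside $\ps$ simply propagates through the sum. The main obstacle, as flagged, is purely the bookkeeping of the linear rescaling of $\psi$ by the uniform increment bound $3\cdot 2^{-k}$ and its effect on $\ps$ via the identity ${\hat\psi_k^{*}}^{-1}(y)=3\cdot 2^{-k}\,\ps(y)$; once that scaling identity is in hand, everything else is a transcription of the proof of Theorem~\ref{ChainingIneqMITheorem}.
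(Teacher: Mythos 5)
Your proposal is correct and follows exactly the route the paper intends: it transcribes the chaining/telescoping proof of Theorem~\ref{ChainingIneqMITheorem}, with the only new ingredient being the uniform per-scale bound $\Lambda_{Y_t}(\mu)\le\psi(3\cdot 2^{-k}\mu)$ and the scaling identity ${\hat\psi_k^*}{}^{-1}(y)=3\cdot 2^{-k}\,{\psi^*}^{-1}(y)$, which correctly generalizes the variance-proxy rescaling used in the subgaussian case (and in fact yields the slightly sharper constant $3$ in place of $3\sqrt{2}$, which dominates the stated bound since ${\psi^*}^{-1}\ge 0$). The paper gives no separate proof, stating only that it is "similar to the proof of Theorem~\ref{ChainingIneqMITheorem}"; your write-up supplies precisely the missing details.
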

\end{appendix}
\end{document}